\tikzset{>=stealth}
\theoremstyle{plain}
\newtheorem{theorem}{Theorem}
\theoremstyle{definition}
\theoremstyle{remark}
\def \zerov {\bm{0}}
\def \onev {\bm{1}}
\def \bv {\bm{b}}
\def \ev {\bm{e}}
\def \gv {\bm{g}}
\def \wv {\bm{w}}
\def \xv {\bm{x}}
\def \zv {\bm{z}}
\def \hv {\bm{h}}
\def \alphav {\bm{\alpha}}
\def \Hv {\mathbf{H}}
\def \Hv {\mathbf{H}}
\def \Iv {\mathbf{I}}
\def \Kv {\mathbf{K}}
\def \Qv {\mathbf{Q}}
\def \Xv {\mathbf{X}}
\def \Yv {\mathbf{Y}}
\def \alphav {\bm{\alpha}}
\def \betav {\bm{\beta}}
\def \xiv {\bm{\xi}}
\def \gammav {\bm{\gamma}}
\def \zetav {\bm{\zeta}}
\def \epsilonv {\bm{\epsilon}}
\def \Ccal {\mathcal{C}}
\def \Dcal {\mathcal{D}}
\def \Ical {\mathcal{I}}
\def \Scal {\mathcal{S}}
\def \Xcal {\mathcal{X}}
\def \Rcal {\mathcal{R}}
\def \Ycal {\mathcal{Y}}
\def \Hbb {\mathbb{H}}
\def \Ibb {\mathbb{I}}
\def \Rbb {\mathbb{R}}
\def \dt {\widetilde{d}}
\def \pt {\widetilde{p}}
\def \Qvt {\widetilde{\Qv}}
\def \alphavt {\widetilde{\alphav}}
\def \zetavt {\widetilde{\zetav}}
\def \betavt {\widetilde{\betav}}
\def \gammavt {\widetilde{\gammav}}
\def \Dcalt {\widetilde{\Dcal}}
\def \odm {\textit{Optimal margin Distribution Machine}}
\DeclareMathOperator*{\argmin}{argmin}
\def \st {\mathrm{s.t.}}
\def \diag {\mathrm{diag}}
\renewcommand{\thefootnote}{\fnsymbol{footnote}}
\title{Scalable Optimal Margin Distribution Machine}
\author{
Yilin Wang \and Nan Cao \and Teng Zhang\footnote{Contact Author} \and Xuanhua Shi \And Hai Jin\\
\affiliations
National Engineering Research Center for Big Data Technology and System\\
Services Computing Technology and System Lab, Cluster and Grid Computing Lab\\
School of Computer Science and Technology, Huazhong University of Science and Technology, China\\
\emails
\{yilin\_wang, nan\_cao, tengzhang, xhshi, hjin\}@hust.edu.cn
}
\begin{document}

\maketitle
\renewcommand{\thefootnote}{\arabic{footnote}}

\begin{abstract}
    \odm~(ODM) is a newly proposed statistical learning framework rooting in the latest margin theory, which demonstrates better generalization performance than the traditional large margin based counterparts. However, it suffers from the ubiquitous scalability problem regarding both computation time and memory storage as other kernel methods. This paper proposes a scalable ODM, which can achieve nearly ten times speedup compared to the original ODM training method. For nonlinear kernels, we put forward a novel distribution-aware partition method to make the local ODM trained on each partition be close and converge fast to the global one. When linear kernel is applied, we extend a communication efficient SVRG method to accelerate the training further. Extensive empirical studies validate that our proposed method is highly computational efficient and almost never worsen the generalization.
\end{abstract}

\section{Introduction}

Recently, the study on margin theory~\cite{Gao2013On} demonstrates an upper bound disclosing that maximizing the minimum margin does not necessarily result in a good performance. Instead, the distribution rather than a single margin is much more critical. Later on, the study on lower bound~\cite{Allan2019Margin} further proves that the upper bound is almost optimal up to a logarithmic factor. Inspired by these insightful works, \citeauthor{Zhang2019Optimal}~\shortcite{Zhang2019Optimal} propose the \odm~(ODM), which explicitly optimizes the margin distribution by maximizing the mean and minimizing the variance simultaneously and exhibits much better generalization than the traditional large margin based counterparts. Due to the superiority shown on both binary and multi-class classification tasks, many works attempt to extend ODM to more genreal learning settings, just to list a few, cost-sensitive learning~\cite{Zhou2016Large,Cheng2017Large}, weakly supervised learning~\cite{Zhang2018Optimal,Zhang2018Semi,Luan2020Optimal,Zhang2020Instance,Cao2022Positive}, multi-label learning~\cite{Tan2020Multi,Cao2021Partial}, online learning~\cite{Zhang2020Dynamic}, and regression~\cite{Rastogi2020Large}. Plenty of successes on various learning tasks validate the superiority of this new statistical learning framework. However, with the dramatic progress of digital technologies, the data generated devices become as diverse as computers, mobile phones, smartwatches, cars, etc., and the amount of data created each day grows tremendously, thus these ODM based extensions suffer from the scalability problem regarding both computation time and memory storage as other kernel methods. 

There have been many works devoted to accelerating kernel methods, which can be roughly classified into three categories. The first category is based on approximation, e.g., the random Fourier feature~\cite{Rahimi2007Random} takes the trigonometric functions as basis functions to approximate the kernel mapping, the Nyström method~\cite{Williams2001Using} generates a low-rank approximations by sampling a subset of columns, and the coreset~\cite{Tan2019Coreset} adaptively sketches the whole data by choosing some landmark points. The second category divides the data into partitions on which local models are trained and combined to produce a larger local or global model, e.g., in~\cite{Graf2004cascade,Hsieh2014dc,Singh2017Dip}, a tree architecture on partitions is designed first, guided by which the solutions of different partitions are aggregated; in~\cite{Yu2005Making,Vazquez2006dsvm,loosli2007training}, the key instances identification and exchange are further introduced to accelerate the training; in~\cite{Si2017MEKA}, both low-rank and clustering structure of the kernel matrix are taken into account to get an approximation of kernel matrix. The third category is directly applying the distributed-style optimization method, such as the augmented Lagrangian method~\cite{Forero2010Admm} and the alternating direction method of multipliers~\cite{Boyd2011Admm}, or extending existing solver to a distributed environment, e.g., distributed SMO~\cite{Cao2006SMO}.

Notice that the random Fourier feature adopts a data-independent kernel mapping and the Nyström method takes a data distribution-unaware sampling, hence their performance are both inferior to the coreset method~\cite{Tan2019Coreset}, which inspires us to leverage data as heavily as possible. Moreover, the distributed off-the-shelf \textit{quadratic programming}~(QP) solvers can be directly applied to train ODM, but they are all general approaches thus ignore the intrinsic structure of the problem and can hardly achieve the greatest efficiency. To take the best of both worlds, this paper proposes a specially designed \textit{scalable ODM}~(SODM). Specifically, we put forward a novel data partition method so that ODM trained on each partition has a solution close to that trained on the whole data. When some partitions are merged to form a larger partition, the solution on it can be quickly obtained by concatenating the previous local solutions as the initial point. Besides, in the case of the linear kernel, we extend a communication efficient SVRG method to accelerate the training further. To summarize, the remarkable differences of SODM compared with existing scalable QP solvers are threefold:

\begin{enumerate}
    \item SODM incorporates a novel partition strategy, which makes the local ODM on each partition be close to the global one so that the training can be accelerated.
    \item SODM accelerates the training further when the linear kernel is applied by extending a communication efficient SVRG.
    \item SODM achieves nearly ten times speedup meanwhile, maintain ODM's generalization performance in most situations.
\end{enumerate}

The rest of this paper is organized as follows. We first introduce some preliminaries, and then present the technical detail of our method. After that we show the experimental results and empirical observations. Finally we conclude the paper with future work.

\section{Preliminaries}

Throughout the paper, scalars are denoted by normal case letters (e.g., $m$ and $M$). Vectors and matrices are denoted by boldface lower and upper case letters, respectively (e.g., $\xv$ and $\Xv$). The $(i,j)$-th entry of matrix $\Xv$ is $[\Xv]_{ij}$. Sets are designated by upper case letters with mathcal font (e.g., $\Scal$). The input space is $\Xcal \subseteq \Rbb^N$ and $\Ycal = \{1,-1\}$ is the label set. For any positive integer $M$, the set of integers $\{ 1, \ldots, M \}$ is denoted by $[M]$. For the feature mapping $\phi: \Xcal \mapsto \Hbb$ associated to some positive definite kernel $\kappa$ where $\Hbb$ is the corresponding \textit{reproducing kernel Hilbert space}~(RKHS), $\kappa(\xv, \zv) = \langle \phi(\xv), \phi(\zv) \rangle_{\Hbb}$ holds for any $\xv$ and $\zv$.

\subsection{Optimal Margin Distribution Machine}

The traditional large margin based methods maximize the minimum margin, and the obtained decision boundary is only determined by a small number of instances with the minimum margin~\cite{Scholkopf2001Learning}, which may hurt the generalization performance.

On the other hand, ODM explicitly optimizes the margin distribution. Given a labeled data set $\{ (\xv_i, y_i) \}_{i \in [M]}$, ODM is formalized by maximizing the margin mean and minimizing the margin variance simultaneously:
\begin{align*}
    \min_{\wv, \xi_i, \epsilon_i} & ~ p(\wv) = \frac{1}{2} \|\wv\|^2 + \frac{\lambda}{2M} \sum_{i \in [M]} \frac{\xi_i^2 + \upsilon \epsilon_i^2}{(1 - \theta)^2} \\
    \st                           & ~ 1 - \theta - \xi_i \le y_i \wv^\top \phi(\xv_i) \le 1 + \theta + \epsilon_i, ~ \forall i \in [M],
\end{align*}
where the margin mean has been fixed as 1 since scaling $\wv$ does not affect the decision boundary, the hyperparameter $\lambda$ is to balance the regularization and empirical loss, the hyperparameter $\upsilon$ is for trading-off the two different kinds of deviation from margin mean, and the hyperparameter $\theta$ is introduced to tolerate small deviations no more than $\theta$. 

By introducing the Lagrange multipliers $\zetav, \betav \in \Rbb^M_+$ for the $2 M$ inequality constraints respectively, the dual problem of ODM is
\begin{align}
     & \min_{\zetav, \betav \in \Rbb^M_+} d(\zetav, \betav) = \frac{1}{2} (\zetav - \betav)^\top \Qv (\zetav - \betav) + \frac{M c}{2} (\upsilon \|\zetav\|^2 \nonumber \\
    \label{eq: odm-dual-1}
     & \qquad + \|\betav\|^2) + (\theta-1) \onev_M^\top \zetav + (\theta+1) \onev_M^\top \betav,
\end{align}
where $[\Qv]_{ij} = y_i y_j \kappa (\xv_i, \xv_j)$ and $c = (1 - \theta)^2 / \lambda \upsilon$ is a constant. By denoting $\alphav = [\zetav; \betav]$, the dual ODM can be rewritten as a standard convex QP problem:
\begin{align} \label{eq: odm-dual-2}
    \min_{\alphav \in \Rbb^{2M}_+} ~ f(\alphav) = \frac{1}{2} \alphav^\top
    \Hv \alphav + \bv^\top \alphav,
\end{align}
where
\begin{align*}
    \Hv = \begin{bmatrix}
        \Qv + M c \upsilon \Iv & -\Qv          \\
        - \Qv                  & \Qv + M c \Iv
    \end{bmatrix}, \quad \bv = \begin{bmatrix}
        (\theta - 1) \onev_M \\
        (\theta + 1) \onev_M
    \end{bmatrix}.
\end{align*}
Notice that Eqn.~\eqref{eq: odm-dual-2} only involves $2M$ decoupled box constraints $\alphav \succeq \zerov$, thus it can be efficiently solved by a dual coordinate descent method~\cite{Zhang2019Optimal}. To be specific, in each iteration, only one variable is selected to update while other variables are kept as constants, which yields the following univariate QP problem of $t$:
\begin{align} \label{eq: dcd}
    \min_t ~ f(\alphav + t \ev_i) = \frac{1}{2} [\Hv]_{ii} t^2 + [\nabla f(\alphav)]_i t + f(\alphav),
\end{align}
with a closed-form solution $\max ([\alphav]_i - [\nabla f(\alphav)]_i / [\Hv]_{ii}, 0)$.

\section{Proposed Method}

SODM works in distributed data level, i.e., dividing the data into partitions on which local models are trained and used to find the larger local or global models. For simplicity, we assume initially there are $K = p^L$ partitions with the same cardinality $m$, i.e., $m = M / K$. The data set $\{ (\xv_i, y_i) \}_{i \in [M]}$ are ordered so that the first $m$ instances are on the first partition, and the second $m$ instances are on the second partition, etc. That is for any instance $(\xv_i, y_i)$, the index of partition to which it belongs is $P(i) = \lceil i/m \rceil$ where $\lceil \cdot \rceil$ is the ceil function. 

Suppose $\{(\xv_i^{(k)}, y_i^{(k)})\}_{i \in [m]}$ is the data of the $k$-th partition, the local ODM trained on it is [cf. Eqn.~\eqref{eq: odm-dual-1}]
\begin{align*}
     & \min_{\zetav_k, \betav_k \in \Rbb^m_+} d_k(\zetav_k, \betav_k) = \frac{1}{2} (\zetav_k - \betav_k)^\top \Qv^{(k)} (\zetav_k - \betav_k) \\
     & + \frac{m c}{2} ( \upsilon \|\zetav_k\|^2 + \|\betav_k\|^2) + (\theta - 1) \onev_m^\top \zetav_k + (\theta + 1) \onev_m^\top \betav_k,
\end{align*}
where $[\Qv^{(k)}]_{ij} = y_i^{(k)} y_j^{(k)} \kappa (\xv_i^{(k)}, \xv_j^{(k)})$. This problem can be rewritten as a standard convex QP problem in the same manner as Eqn.~\eqref{eq: odm-dual-2}, and efficiently solved by dual coordinate descent method as Eqn.~\eqref{eq: dcd}.

\begin{algorithm}[ht]
\caption{SODM}
\label{Alg:1}
    \textbf{Input}: Data set $\Dcal = {\{(\xv_i, y_i)\}_{i \in [M]}}$, partition control parameter $p$, number of stratums $S$, number of iterations $L$.\\
    \textbf{Output}: The dual solution$.$
    \begin{algorithmic}[1]
        \STATE Initialize $S$ stratums $\Ccal_1, \ldots, \Ccal_S$ by Eqn.~\eqref{landmarks}-\eqref{clusterstrategy}.
        \STATE Initialize partitions $\Dcal_1, \ldots, \Dcal_{p^L}$ by sampling without replacement from stratums $\Ccal_1, \ldots, \Ccal_S$. 
        \STATE Initialize $\alphav_1, \ldots, \alphav_{p^L}$ as $\zerov$.
        \FOR{$l = L, \ldots, 1$}
        \IF{all $\alphav_1, \ldots, \alphav_{p^l}$ converge}
            \STATE \textbf{return} $[\alphav_1; \ldots; \alphav_{p^l}]$.
        \ENDIF
        \FOR{$k = 1,\ldots, p^l$}
        \STATE Solve the local ODM on $\Dcal_k$ by dual coordinate descent with $\alphav_k$ as the initial solution.
        \IF{$k \equiv 0 ~ (\mathrm{mod} ~ p)$}
            \STATE Form new $\Dcal_{k/p}$ by merging $\Dcal_{k-p+1}, \ldots, \Dcal_k$.
            \STATE $\alphav_{k/p} = [\alphav_{k-p+1}; \ldots; \alphav_k]$.
        \ENDIF
        \ENDFOR
        \ENDFOR
        \STATE \textbf{return} $[\alphav_1; \ldots; \alphav_p]$.
    \end{algorithmic}
\end{algorithm}

Once the parallel training of $p^L$ local ODMs are completed, we get $p$ solutions. Then we merge every $p$ partitions to form $K / p = p^{L-1}$ larger partitions. On each larger partition, a new local ODM is trained again by dual coordinate descent method, but the optimization procedure is not executed from the scratch. Instead, the previous $p$ solutions are concatenated as the initial point of the optimization. By our proposed novel partition strategy in Section \ref{sec: partition-strategy}, this concatenated solution is already a good approximation to the optimal solution thus converges much faster. The above procedure is repeated until the solution converges or all the partitions are merged together. Algorithm~\ref{Alg:1} summarizes the pseudo-code of SODM.

\subsection{Convergence}

In this section, we present a theorem to guarantee the convergence of the proposed method. Notice that the optimization variables on each partition are decoupled, they can be jointly optimized by the following problem [cf. Eqn.~\eqref{eq: odm-dual-1}]
\begin{align}
    \min_{\zetav, \betav \in \Rbb^M_+} & \dt (\zetav, \betav) = \frac{1}{2} (\zetav - \betav)^\top \Qvt (\zetav - \betav) + \frac{m c}{2} ( \upsilon \|\zetav\|^2 \nonumber \\
    \label{eq: approximation-problem}
                                       & + \|\betav\|^2) + (\theta-1) \onev_M^\top \zetav + (\theta+1) \onev_M^\top \betav,
\end{align}
where $\Qvt = \diag(\Qv^{(1)}, \ldots, \Qv^{(K)})$ is a block diagonal matrix. It can be seen that the smaller the $K$, the more close the Eqn.~\eqref{eq: approximation-problem} to ODM, and when $K=1$, it exactly degenerates to ODM. Therefore, SODM deals with ODM by solving a series of problems which approaches to it, and the solution of former problems can be helpful for the optimization of the latter ones.

\begin{theorem}\label{theorem-1}
    Suppose the optimal solutions of ODM and its approximate problem, i.e., Eqn.~\eqref{eq: approximation-problem}, are $\alphav^\star = [\zetav^\star; \betav^\star]$ and $\alphavt^\star = [\zetavt^\star; \betavt^\star]$, respectively, then the gaps between these two optimal solutions satisfy
    \begin{align}
         \label{eq: thm1-1}
         & 0 \le d(\zetavt^\star, \betavt^\star) - d(\zetav^\star, \betav^\star) \le U^2 (Q + M (M - m) c), \\
         \label{eq: thm1-2}
         & \| \alphavt^\star - \alphav^\star\|^2 \le \frac{U^2}{M c \upsilon} (Q + M (M - m) c),
    \end{align}
    where $U = \max ( \|\alphav^\star\|_\infty, \|\alphavt^\star\|_\infty)$ upperbounds the infinity norm of solutions, and $Q = \sum_{i,j:P(i) \neq P(j)} |[\Qv]_{ij}|$ is the sum of the absolute values of $\Qv$'s entries which turn to zero in $\Qvt$.
\end{theorem}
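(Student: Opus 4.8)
The plan is to deduce both bounds from three ingredients: $\alphav^\star=[\zetav^\star;\betav^\star]$ minimizes $d$ over $\Rbb^{2M}_+$, $\alphavt^\star=[\zetavt^\star;\betavt^\star]$ minimizes $\dt$ over $\Rbb^{2M}_+$, and an explicit expression for the gap $d-\dt$. Since $d$ and $\dt$ differ only in the cross-partition block of the kernel matrix and in the coefficient in front of the norm regularizer, for every $\zetav,\betav\in\Rbb^M_+$ one has
\begin{align*}
    d(\zetav,\betav)-\dt(\zetav,\betav)=\tfrac{1}{2}(\zetav-\betav)^\top(\Qv-\Qvt)(\zetav-\betav)+\tfrac{(M-m)c}{2}\big(\upsilon\|\zetav\|^2+\|\betav\|^2\big),
\end{align*}
where $\Qv-\Qvt$ keeps exactly the entries $[\Qv]_{ij}$ with $P(i)\neq P(j)$. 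Because the multipliers are nonnegative, $|[\zetav-\betav]_i|\le\max([\zetav]_i,[\betav]_i)$, so evaluated at $\alphav^\star$ or at $\alphavt^\star$ we obtain $\big|(\zetav-\betav)^\top(\Qv-\Qvt)(\zetav-\betav)\big|\le U^2\sum_{i,j:P(i)\neq P(j)}|[\Qv]_{ij}|=U^2Q$ and $\|\zetav\|^2,\|\betav\|^2\le MU^2$.

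The left inequality of \eqref{eq: thm1-1} is immediate, since $\alphavt^\star$ is feasible for the box-constrained problem $\min_{\Rbb^{2M}_+}d$ whose minimizer is $\alphav^\star$. For the right inequality I would pass through $\dt$:
\begin{align*}
    d(\alphavt^\star)-d(\alphav^\star)=\underbrace{\big(d(\alphavt^\star)-\dt(\alphavt^\star)\big)}_{(\mathrm{I})}+\underbrace{\big(\dt(\alphavt^\star)-\dt(\alphav^\star)\big)}_{(\mathrm{II})}+\underbrace{\big(\dt(\alphav^\star)-d(\alphav^\star)\big)}_{(\mathrm{III})}.
\end{align*}
Term $(\mathrm{II})\le0$ because $\alphavt^\star$ minimizes $\dt$ and $\alphav^\star$ is feasible for $\dt$. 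Term $(\mathrm{I})$ is at most $\tfrac{1}{2}U^2Q+\tfrac{(M-m)c}{2}(\upsilon+1)MU^2$ by the estimates above. In $(\mathrm{III})=-\big(d(\alphav^\star)-\dt(\alphav^\star)\big)$ the regularizer contribution is nonnegative and enters with a minus sign, so it can be dropped, leaving $(\mathrm{III})\le\tfrac{1}{2}U^2Q$. Summing the three bounds and using $\upsilon\le1$ gives $d(\alphavt^\star)-d(\alphav^\star)\le U^2\big(Q+M(M-m)c\big)$, establishing \eqref{eq: thm1-1}.

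For \eqref{eq: thm1-2} I would use strong convexity of $d$. Its Hessian $\Hv$ splits as a positive semidefinite matrix plus $Mc\,\diag(\upsilon\Iv,\Iv)$, so $\Hv-Mc\upsilon\Iv$ is positive semidefinite (for $\upsilon\le1$) and $d$ is $Mc\upsilon$-strongly convex. Since $\alphav^\star$ minimizes $d$ over the convex set $\Rbb^{2M}_+$, first-order optimality gives $\langle\nabla d(\alphav^\star),\alphavt^\star-\alphav^\star\rangle\ge0$, whence $d(\alphavt^\star)-d(\alphav^\star)\ge\tfrac{Mc\upsilon}{2}\|\alphavt^\star-\alphav^\star\|^2$. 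Combining this with \eqref{eq: thm1-1} yields \eqref{eq: thm1-2}.

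I expect the main obstacle to be the sign bookkeeping in $(\mathrm{I})$–$(\mathrm{III})$: the splitting has to be arranged so that the regularizer terms that do not cancel appear with the favourable sign, and one must not forget that the coordinatewise bound $|[\zetav-\betav]_i|\le U$ relies on $\zetav,\betav\succeq\zerov$. Pinning down the strong-convexity constant of $\Hv$, together with the (mild) requirement $\upsilon\le1$, are the remaining points that need care.
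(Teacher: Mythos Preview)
Your proposal is correct and follows essentially the same route as the paper: decompose $d(\alphavt^\star)-d(\alphav^\star)$ via $\dt$, use the optimality of $\alphavt^\star$ for $\dt$ to drop term $(\mathrm{II})$, bound the cross-partition kernel part by $U^2Q$ and the regularizer part by $M(M-m)cU^2$ (with $\upsilon\le1$), and then derive \eqref{eq: thm1-2} from $Mc\upsilon$-strong convexity of $f$ together with the first-order optimality condition at $\alphav^\star$. The only cosmetic difference is that the paper subtracts the two identities for $d-\dt$ evaluated at $\alphav^\star$ and $\alphavt^\star$ in one step, and it appeals to complementary slackness ($\zeta_i\beta_i=0$) to get $|\gamma_i|\le U$, whereas your observation $|[\zetav-\betav]_i|\le\max([\zetav]_i,[\betav]_i)$ for nonnegative entries achieves the same bound without that extra ingredient.
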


Due to the page limitations, we only provide the sketch of proof here. The full proof can be found in \nameref{appendixproof}.

\begin{proof}
    The left-hand side of the Eqn.~\eqref{eq: thm1-1} is due to the optimality of $\zetav^\star$ and $\betav^\star$.

    By comparing the definition of $d(\zetav, \betav)$ in Eqn.~\eqref{eq: odm-dual-1} and $\dt (\zetav, \betav)$ in Eqn.~\eqref{eq: approximation-problem}, we can find that the only differences are the change of $\Qv$ to $\Qvt$ and $M$ to $m$. Therefore the gap between $d(\zetav^\star, \betav^\star)$ and $\dt (\zetav^\star, \betav^\star)$ can be upper bounded by $U$ and $Q$. The gap between $d(\zetavt^\star, \betavt^\star)$ and $\dt (\zetavt^\star, \betavt^\star)$ can be upper bounded in the same manner. Combining these together with $\dt (\zetavt^\star, \betavt^\star) \le \dt (\zetav^\star, \betav^\star)$ can yield the right-hand side of the Eqn.~\eqref{eq: thm1-1}.

    Notice that $f(\alphavt^\star)$ is a quadratic function, hence besides the gradient $\gv$ and Hessian matrix $\Hv$, all its higher derivatives vanish, and it can be precisely expanded at $\alphav^\star$ as
    \begin{align*}
        f(\alphav^\star) + \gv^\top (\alphavt^\star - \alphav^\star) + \frac{1}{2} (\alphavt^\star - \alphav^\star)^\top \Hv (\alphavt^\star - \alphav^\star),
    \end{align*}
    in which $\gv^\top (\alphavt^\star - \alphav^\star)$ is nonnegative according to the the first order optimality condition. Furthermore, $\Hv$ can be lower bounded by the sum of a positive semidefinite matrix and a scalar matrix:
    \begin{align*}
        \Hv \succeq \begin{bmatrix}
            \Qv   & -\Qv \\
            - \Qv & \Qv
        \end{bmatrix} + M c \upsilon \begin{bmatrix}
            \Iv    \\
             & \Iv
        \end{bmatrix}.
    \end{align*}
    By putting all these together, we can show that $\| \alphavt^\star - \alphav^\star\|^2$ is upper bounded by $f(\alphavt^\star) - f(\alphav^\star)$, i.e., $d(\zetavt^\star, \betavt^\star) - d(\zetav^\star, \betav^\star)$, and with the right-hand side of the Eqn.~\eqref{eq: thm1-1}, we can derive the Eqn.~\eqref{eq: thm1-2}.
\end{proof}

This theorem indicates that the gap between the optimal solutions and the suboptimal solutions obtained in each iteration depends on $M-m$ and $Q$. As the iteration going on, the partitions become larger and larger, then the number of instances $m$ on each partition approaches to the total number of instances $M$; on the other hand, the matrix $\Qvt$ approaches to $\Qv$ which makes $Q$ decrease. Therefore, the solution obtained in each iteration of SODM is getting closer and closer to that of ODM, that is to say, our proposed algorithm converges.

\subsection{Partition Strategy} \label{sec: partition-strategy}

In this section we detail the partition strategy. It can significantly affect the optimization efficiency thus plays a more important role in our proposed method. Up to now, most partition strategies utilize the clustering algorithms to form the partitions. For example, \citeauthor{Hsieh2014dc}~\shortcite{Hsieh2014dc} regards each cluster of the kernel $k$-means as a partition. However, ODM heavily depends on the mean and variance of the training data. Directly treating clusters as partitions will lead to huge difference between the distribution of each partition and the whole data, and consequently huge gap between the local solutions and global solution. 

To preserve the original distribution possibly, we borrow the idea from stratified sampling, i.e., we first divide the data set into some homogeneous stratums, and then apply random sampling within each stratum. To be specific, suppose the goal is to generate $K$ partitions. We first choose $S$ landmark points $\{ \phi(\zv_s) \}_{s \in [S]}$ in RKHS, and then construct one stratum for each landmark point by assigning the rest of instances to the stratum in which its nearest landmark point lies, i.e., the index of stratum containing $\xv_i$ is
\begin{align} \label{landmarks}
    \varphi(i) = \argmin_{s \in [S]} \| \phi(\xv_i) - \phi(\zv_s) \|.
\end{align}
For each stratum $\Ccal_s$, we equally divide it into $K$ pieces by random sampling without replacement and take one piece from each stratum to form a partition, hence totally $K$ partitions are created.

The remaining question is how to select these landmark points. Obviously, they should be representative enough to sketch the whole data distribution. To this end, we exploit the minimal principal angle between different stratum:
\begin{align*}
    \tau = \min_{i \neq j} \left\{ \arccos \frac{\langle \phi(\xv), \phi(\zv) \rangle}{\|\phi(\xv)\|\|\phi(\zv)\|} ~ \bigg| ~ \xv \in \Ccal_i, \zv \in \Ccal_j \right\}.
\end{align*}
Apparently, the larger the angle, the higher variation among the stratums, and the more representative each partition is, which is strictly described by the following theorem.

\begin{theorem}\label{theorem-2}
    For shift-invariant kernel $\kappa$ with $\kappa(\xv, \zv) = \kappa(\xv - \zv)$, assume $\kappa(0) = r^2$, that is $\|\phi(\xv)\| = r$ for any $\xv$. With the partition strategy described above, we have
    \begin{align*}
        d_k & (\zetav_k, \betav_k) - d(\zetav^\star, \betav^\star) 
        \leq U^2 M^2 c +2 U M \\ & \quad + \frac{U^2}{2} ( M^2 r^2 + r^2 \cos \tau (2C - M^2) ), ~ \forall k \in [K],
    \end{align*}
    where $C = \sum_{i,j \in [M]} 1_{\varphi(i) \neq \varphi(j)}$, and $U$ is the same with Theorem~\ref{theorem-1}.
\end{theorem}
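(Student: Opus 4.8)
The plan is to sandwich $d_k(\zetav_k,\betav_k)-d(\zetav^\star,\betav^\star)$ through a single intermediate feasible point. Let $\zetav^\star_{[k]},\betav^\star_{[k]}\in\Rbb^m_+$ denote the sub-vectors of $\zetav^\star,\betav^\star$ indexed by the instances lying in $\Dcal_k$. Since $(\zetav_k,\betav_k)$ minimizes $d_k$ over $\Rbb^m_+\times\Rbb^m_+$ and $(\zetav^\star_{[k]},\betav^\star_{[k]})$ is feasible for it, $d_k(\zetav_k,\betav_k)\le d_k(\zetav^\star_{[k]},\betav^\star_{[k]})$; it therefore suffices to upper bound $d_k(\zetav^\star_{[k]},\betav^\star_{[k]})$, lower bound $d(\zetav^\star,\betav^\star)$, and subtract. (Taking $\zerov$ as the reference point would already give a cruder statement; the restriction is what makes the quadratic part carry the $\cos\tau$ dependence.)

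For the lower bound on $d(\zetav^\star,\betav^\star)$: in Eqn.~\eqref{eq: odm-dual-1} both $\tfrac12(\zetav^\star-\betav^\star)^\top\Qv(\zetav^\star-\betav^\star)$, which equals $\tfrac12\|\sum_i y_i(\zeta_i^\star-\beta_i^\star)\phi(\xv_i)\|^2$, and the ridge term $\tfrac{Mc}{2}(\upsilon\|\zetav^\star\|^2+\|\betav^\star\|^2)$ are nonnegative, so dropping them and using $\zetav^\star,\betav^\star\succeq\zerov$, $\|\zetav^\star\|_\infty\le U$ and $\theta\le1$ gives $d(\zetav^\star,\betav^\star)\ge(\theta-1)\onev_M^\top\zetav^\star\ge-(1-\theta)MU$. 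Symmetrically, the linear part of $d_k(\zetav^\star_{[k]},\betav^\star_{[k]})$ is $(\theta-1)\onev_m^\top\zetav^\star_{[k]}+(\theta+1)\onev_m^\top\betav^\star_{[k]}\le(\theta+1)mU\le(\theta+1)MU$. The two estimates add to $((1-\theta)+(\theta+1))MU=2UM$, the second summand.

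The crux is the remaining part of $d_k(\zetav^\star_{[k]},\betav^\star_{[k]})$, namely $\tfrac12(\zetav^\star_{[k]}-\betav^\star_{[k]})^\top\Qv^{(k)}(\zetav^\star_{[k]}-\betav^\star_{[k]})+\tfrac{mc}{2}(\upsilon\|\zetav^\star_{[k]}\|^2+\|\betav^\star_{[k]}\|^2)$. The ridge part is at most $\tfrac{mc}{2}(\upsilon+1)mU^2\le U^2M^2c$ (using $m\le M$, $\upsilon\le1$, $\|\cdot\|_\infty\le U$), giving the first summand. For the quadratic part I would \emph{not} estimate it through the squared norm $\|\sum_{i\in\Dcal_k}y_i(\zeta_i^\star-\beta_i^\star)\phi(\xv_i)\|^2$ (that only yields an $r^2$ bound with no $\cos\tau$); instead I would open it entrywise as $\tfrac12\sum_{i,j\in\Dcal_k}y_iy_j(\zeta_i^\star-\beta_i^\star)(\zeta_j^\star-\beta_j^\star)\kappa(\xv_i,\xv_j)$ and bound each term: $|\zeta_i^\star-\beta_i^\star|\le U$; for a pair with $\varphi(i)=\varphi(j)$ use $|\kappa(\xv_i,\xv_j)|\le\|\phi(\xv_i)\|\|\phi(\xv_j)\|=r^2$, and for a pair with $\varphi(i)\ne\varphi(j)$ the very definition of $\tau$ gives $\kappa(\xv_i,\xv_j)=r^2\cos\angle(\phi(\xv_i),\phi(\xv_j))\le r^2\cos\tau$, so that term is at most $U^2r^2\cos\tau$. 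Splitting the double sum into same-stratum and different-stratum index pairs, bounding the number of each type occurring inside $\Dcal_k$ by the corresponding global counts built from $C=\sum_{i,j\in[M]}1_{\varphi(i)\ne\varphi(j)}$, and collecting terms yields, after the appropriate bookkeeping, the third summand $\tfrac{U^2}{2}(M^2r^2+r^2\cos\tau(2C-M^2))$. Adding the three summands, and noting the derivation never used the identity of $k$, gives the bound for all $k\in[K]$.

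The main obstacle is this last step. First, one must open the quadratic form into a double sum so that the \emph{pointwise} angle defining $\tau$ applies entry by entry — a $\|\sum_i z_i\phi(\xv_i)\|^2$ estimate never surfaces $\cos\tau$. Second, one must control the sign of $y_iy_j(\zeta_i^\star-\beta_i^\star)(\zeta_j^\star-\beta_j^\star)$ against that of $\kappa(\xv_i,\xv_j)$, which is where the kernel inducing nonnegative inner products (as the Gaussian RBF, for which $\kappa\in(0,r^2]$ and $\tau\le\pi/2$) is used to turn $\kappa\le r^2\cos\tau$ into a bound on each term. Third, converting ``number of stratum-crossing index pairs inside a partition'' into the global quantity $C$ is exactly the place where the stratified construction of the partitions is invoked. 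The other ingredients — optimality of the local solution, nonnegativity of the $\Qv$-quadratic and the ridge term, and the crude box bounds $\|\cdot\|_\infty\le U$ — are routine.
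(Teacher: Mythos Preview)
Your route diverges from the paper's in a substantial way. The paper does \emph{not} plug a restricted sub-vector of $\alphav^\star$ into $d_k$. Instead it builds an auxiliary $M$-point set $\Dcalt_k$ by repeating every instance of $\Dcal_k$ exactly $K$ times, observes that primal ODM on $\Dcalt_k$ and on $\Dcal_k$ have identical feasible sets and (because the $1/M$ versus $1/m$ scaling in the loss exactly cancels the $K$-fold replication) identical optimal values, and invokes strong duality to get $d_k(\zetav_k,\betav_k)=\dt_k(\zetavt_k,\betavt_k)$. Now both $\dt_k$ and $d$ are $M$-dimensional duals, and the paper bounds the quadratic piece by $\tfrac{U^2}{2}\sum_{i,j\in[M]}\big|[\Qvt_k]_{ij}-[\Qv]_{ij}\big|$. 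The point of the replication is that the stratified partition gives $\Dcalt_k$ the \emph{same stratum-wise cardinalities} as $\Dcal$, so the split into same- and different-stratum index pairs produces the global counts $M^2-C$ and $C$ directly; bounding the range of each entry (cross-stratum entries lie in $[-r^2,r^2\cos\tau]$, same-stratum entries in $[r^2\cos\tau,r^2]$) then yields exactly $\tfrac{U^2}{2}\big(M^2r^2+r^2\cos\tau(2C-M^2)\big)$.

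Your argument instead bounds a single $m\times m$ quadratic form entrywise. Even after the loosening you propose --- replacing the local stratum-pair counts inside $\Dcal_k$ by the global $C$ and $M^2-C$ --- what falls out is $\tfrac{U^2r^2}{2}\big((M^2-C)\cdot 1+C\cdot\cos\tau\big)$, because you are bounding a single kernel value, not a difference of two. The paper's expression is $\tfrac{U^2r^2}{2}\big((M^2-C)(1-\cos\tau)+C(1+\cos\tau)\big)$; the two coincide only when $M^2=2C$. So the ``appropriate bookkeeping'' you defer does not actually land on the stated third summand --- it is a genuine gap, not a routine step. Separately, your handling of the sign issue imports a hypothesis (nonnegative kernel, hence $\tau\le\pi/2$) that the theorem does not assume; the paper sidesteps this entirely by working with $\big|[\Qvt_k]_{ij}-[\Qv]_{ij}\big|$, which is why the factors $(1\pm\cos\tau)$ appear in place of your $1$ and $\cos\tau$. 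The replication-plus-duality device is the missing idea.
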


\begin{proof}
    We construct the auxiliary data set $\widetilde{\Dcal}_k$ by repeating each instance in $\Dcal_k$ for $K$ times, and then show that primal ODM on $\widetilde{\Dcal}_k$ and $\Dcal_k$ have the same optimal objective. Since the strong duality theorem holds for ODM, we have $d_k(\zetav_k, \betav_k) = p_k(\wv_k) = \widetilde{p}_k(\wv) = \dt_k (\zetavt_k, \betavt_k)$. Next we decompose $\dt_k (\zetavt_k, \betavt_k) - d(\zetav^\star, \betav^\star)$ into
    \begin{align*}
        \frac{1}{2} (\zetavt_k - \betavt_k)^\top \Qvt_k (\zetavt_k - \betavt_k) - \frac{1}{2} (\zetav^\star - \betav^\star)^\top \Qv (\zetav^\star - \betav^\star),
    \end{align*}
    and
    \begin{align*}
        \frac{M c \upsilon}{2} & (\|\zetavt_k\|^2 - \|\zetav^\star\|^2) + \frac{M c}{2} (\|\betavt_k\|^2 - \|\betav^\star\|^2) \\
        & + (\theta - 1) \onev_M^\top (\zetavt_k - \zetav^\star) + (\theta + 1) \onev_M^\top (\betavt_k - \betav^\star).
    \end{align*}
    Putting the upper bounds of these two terms together can conclude the proof.
\end{proof}

In this theorem, we derive an upper bound of the gap between the optimal objective value on $\Dcal$ and $\Dcal_k$. Notice that $2C > M^2$ holds for any $s \in [S]$ when $|\Ccal_s| < M/2$ is satisfied, a quite mild condition, thus we can get more approximate solution in each partition by maximizing the minimal principal angle $\tau$ in RKHS.

Unfortunately, the resultant maximization problem is difficult to solve, so we can hardly acquire the optimal landmark points. But notice that the Gram matrix formed by landmark points should be diagonally dominant and the more strict the better, we can resort to maximizing its determinant. Specifically, suppose $\zv_1,...,\zv_s$ are given, we seek $\zv_{s+1}$ to maximize
\begin{align*}
    \begin{vmatrix}
    \Kv_{s,s} & \Kv_{s,s+1} \\ \Kv_{s,s+1}^\top & \kappa(\zv_{s+1}, \zv_{s+1})
    \end{vmatrix} = r^2 (r^2 - \Kv_{s,s+1}^\top \Kv_{s,s}^{-1} \Kv_{s,s+1}),
\end{align*}
where $\Kv_{s,s} \in \Rbb^{s \times s}$ is the Gram matrix formed by $\zv_1,...,\zv_s$, and $\Kv_{s,s+1} = [\kappa(\zv_{s+1}, \zv_1); ...; \kappa(\zv_{s+1}, \zv_s)]$ is a column vector. The equality holds due to the Schur's complement. As for $\zv_1$, since any choice makes no difference, we can directly set it as $\xv_1$, and generate other landmark points iteratively via
\begin{align} \label{clusterstrategy}
    \zv_{s+1} = \argmin_{\zv_{s+1}} ~ \Kv_{s,s+1}^\top \Kv_{s,s}^{-1} \Kv_{s,s+1}, ~ \forall s \in [S-1].
\end{align}
%where $\Kv_{\zv,s} = [\kappa(\zv, \zv_1), ..., \kappa(\zv, \zv_s)],$ $\Kv_s \in \Rbb^{s \times s}$ is defined as $[\Kv_s]_{i,j} = \kappa(\zv_i, \zv_j)$. That is, $\zv_s$ is chosen to maximize the Schur's complement regarding $\zv_1,...,\zv_{s-1}$. It is obvious that the maximum determinant of $\Kv_s$ is achieved when $\phi(\zv_s)$ is orthogonal to $\spn \{\phi(\zv_1),...,\phi(\zv_{s - 1})\}$. By computing Schur's complement, we transform the kernel matrix into diagonal matrix. Therefore, we can maximize the minimal principal angle by choosing the maximal element in Schur's complement.

It is noteworthy that each partition generated by our proposed strategy extracts proportional instances from each stratum, thus preserves the distribution. Besides, compared with other partition strategies based on $k$-means~\cite{Singh2017Dip}, we consider both the original feature space and the situation when data can hardly be linearly separated. Last but not least, our partition strategy is computationally efficient.

\begin{algorithm}[!h]
\caption{Accelerated SODM for linear kernel}
\label{Alg:2}
    \textbf{Input}: Data set $\Dcal = {\{(\xv_i, y_i)\}_{i \in [M]}}$, number of partitions $K$, number of stratums $S$, number of epoch $E$, step size $\eta$.\\
    \textbf{Output}: Solution $\wv^{(E)}$ at epoch $E.$
    \begin{algorithmic}[1]
        \STATE Initialize $S$ stratums $\Ccal_1, \ldots, \Ccal_S$ by Eqn.~\eqref{landmarks}-\eqref{clusterstrategy}.
        \STATE Initialize partitions $\Dcal_1, \ldots, \Dcal_{K}$ by sampling without replacement from stratums $\Ccal_1, \ldots, \Ccal_S$. 
        \STATE Generate the auxiliary array $\Rcal_1, \ldots, \Rcal_K$ where $\Rcal_j = \{i \mid (\xv_i, y_i) \in \Dcal_j\}$.
        \FOR{$l = 0, 1, \ldots, E-1$}
        \STATE The center node sends $\wv^{(l)}$ to each node.
        \FOR{each node $j = 1, 2, \ldots, K$ in parallel}
        \STATE $\hv_j^{(l)} = \sum_{i \in {\Dcal_j}} {\nabla {p_i} (\wv^{(l)})}$.
        \ENDFOR
        \STATE The center node computes $\hv^{(l)} = \frac{1}{M} \sum_{j = 1}^K {\hv_j^{(l)}}$ and sends it to each node.
        \STATE $\wv_0^{(l+1)} = \wv^{(l)}$.
        \STATE $t=0$.
        \FOR{$j = 1, 2, \ldots, K$}
        \STATE Sample instances $(\xv_i, y_i)$ from $\Dcal_j$ where $i \in \Rcal_j$.
        \STATE $\wv_{t + 1}^{(l+1)} = {\wv_t^{(l+1)}} - \eta (\nabla {p_i} (\wv_t^{(l+1)}) - \nabla {p_i} (\wv^{(l)}) + \hv^{(l)})$.
        \STATE $t = t+1$.
        \STATE $\Rcal_j = \Rcal_j \backslash i$.
        \IF{$\Rcal_j = \emptyset $}
        \STATE Continue.
        \ENDIF
        \ENDFOR
        \STATE ${\wv^{(l+1)}} = \wv_t^{(l+1)}$.
        \ENDFOR
        \STATE \textbf{return} $\wv^{(E)}$.
    \end{algorithmic}
\end{algorithm}

\subsection{Acceleration for Linear Kernel}

Dual coordinate descent method requires too many computation and storage resources, mainly caused by the enormous kernel matrix. But fortunately, when linear kernel is applied, we can directly solve the primal form of ODM, avoiding the computation and storage of kernel matrix. 

The objective function of ODM is differentiable and the gradient of $p(\wv)$ on instance $(\xv_i, y_i)$ is
\begin{align*}
    \nabla {p_i} (\wv) 
    & = \wv + \frac{\lambda(y_i \wv^\top \xv_i + \theta - 1)y_i \xv_i 1_{i \in \Ical_1}}{(1-\theta)^2} \\
    & \quad + \frac{\lambda\upsilon(y_i \wv^\top \xv_i - \theta - 1)y_i \xv_i 1_{i \in \Ical_2}}{(1-\theta)^2},
\end{align*}
where $\Ical_1 = \{i \mid y_i \wv^\top \xv_i < 1 - \theta\}$ and $\Ical_2 = \{i \mid y_i \wv^\top \xv_i > 1 + \theta\}$. \textit{Distributed SVRG}~(DSVRG)~\cite{Lee2017DSVRG} can be exploited in this scenario. It generates a series of extra auxiliary data sets sampling from the original data set without replacement which share the same distribution as the whole data set, so that an unbiased estimation of the gradient can be acquired. In each iteration, all nodes (partitions) are joined together to compute the full gradient first. Then each node performs the iterative update of SVRG in serial in a ``round robin'' fashion, i.e., let all nodes stay idle except one node performing a certain steps of iterative updates using its local auxiliary data and passing the solution to the next node. Algorithm~\ref{Alg:2} summarizes the process of DSVRG for SODM.

\begin{table*}[!h]
    \centering
    \begin{tabular}{ l | c c c c c c c c }
        \hline \hline \noalign{\smallskip}
        Data sets  & \textsf{gisette} & \textsf{svmguide1}  & \textsf{phishing} & \textsf{a7a}  & \textsf{cod-rna}  & \textsf{ijcnn1} & \textsf{skin-nonskin} & \textsf{SUSY} \\
        \noalign{\smallskip} \hline \noalign{\smallskip}
        \#Instance & 7,000 & 7,089  & 11,055  & 32,561  & 59,535  & 141,691  & 245,057  & 5,000,000 
        \\
        \#Feature  & 5,000 & 4  & 68  & 123  & 8  & 22  & 3  & 18 
        \\
        \noalign{\smallskip} \hline \hline
    \end{tabular}
    \caption{Data set statistics}
    \label{tab:1}
\end{table*}

\begin{table*}[!h]
    \centering
    \begin{tabular}{ l | c | c | c | c | c | c | c | c | c}
        \hline \hline \noalign{\smallskip}
        \multirow{2}{*}{Data sets} &  ODM & \multicolumn{2}{c|}{Ca-ODM} & \multicolumn{2}{c|}{DiP-ODM}&  \multicolumn{2}{c|}{DC-ODM} & \multicolumn{2}{c}{SODM}  \\
        \noalign{\smallskip} \cline{2-10} \noalign{\smallskip}
        ~           & Acc. & Acc.  & Time & Acc. & Time & Acc. & Time & Acc. & Time\\
        \noalign{\smallskip} \hline \noalign{\smallskip}
        \textsf{gisette}  & .976 & .957 & 90.22    & .970 & 68.02 & .964 & 70.44 & \textbf{.972} & 59.89 
        \\
        \textsf{svmguide1}  & .970 & .872 & 38.90  & .903  & 35.25 & .943 & 50.11 & \textbf{.944} & 28.74
        \\
        \textsf{phishing}  & .941 & .880  & 49.60  & .901  & 52.61 & .936 & 59.47  & \textbf{.938}  & 25.22 
        \\
        \textsf{a7a}  & .882 & .824  & 68.36  & .813  & 61.24 & .815 & 106.51 & \textbf{.838}  & 32.67 
        \\
        \textsf{cod-rna}  & N/A & .892  & 499.38  & .905  & 532.68 & .931 & 400.61 & \textbf{.933}  & 55.41 
        \\
        \textsf{ijcnn1}  & N/A & .889  & 185.20  & .893  & 182.71 & .915 & 226.26 & \textbf{.927}  & 40.32 
        \\
        \textsf{skin-nonskin}  & N/A & .806  & 338.73  & .830  & 437.20 & \textbf{.962} & 407.46 & .956  & 283.36 
        \\
        \textsf{SUSY}  & N/A & .733  & 4280.23  & .744  & 5678.66 & .747 & 7009.36 & \textbf{.760}  & 1004.33 
        \\
        \noalign{\smallskip} \hline \hline
    \end{tabular}
    \caption{The test accuracy and time cost (in seconds) of different methods using RBF kernel. The best accuracy on each data set is bolded. N/A means the corresponding method does not return results in 48 hours.}
    \label{tab:2}
\end{table*}

\section{Experiments}

In this section, we evaluate the proposed algorithms by comparing with other SOTA scalable QP solvers.

\begin{table*}[!h]
    \centering
    \begin{tabular}{ l | c | c | c | c | c | c | c | c | c }
        \hline \hline \noalign{\smallskip}
        \multirow{2}*{Data sets} &  \multicolumn{1}{c|}{ODM} &
        \multicolumn{2}{c|}{Ca-ODM} & \multicolumn{2}{c|}{DiP-ODM} & \multicolumn{2}{c|}{DC-ODM}  & \multicolumn{2}{c}{SODM}  \\
        \noalign{\smallskip} \cline{2-10} \noalign{\smallskip}
        ~           & Acc. & Acc.  & Time & Acc. & Time & Acc. & Time & Acc. & Time\\
        \noalign{\smallskip} \hline \noalign{\smallskip}
        \textsf{gisette} & .972 & .953 & 82.35 & .966 & 74.36 & \textbf{.968} & 66.32 & \textbf{.968} & 28.57 
        \\
        \textsf{svmguide1} & .964 & .863 & 35.27 & .898 & 40.52 & \textbf{.933} & 41.85 & .931 & 18.93 
        \\
        \textsf{phishing}  & .937 & .894  & 33.84  & .921  & 38.60 & .926 & 29.04 & \textbf{.933} & 11.75
        \\
        \textsf{a7a}  & .850 & .795  & 47.59  & .831  & 59.17& .833 & 85.42 & \textbf{.847} & 16.41  
        \\
        \textsf{cod-rna}  & .938 & .882  & 435.19  & .894  & 434.77& .890 & 331.46 & \textbf{.934} & 17.29 
        \\
        \textsf{ijcnn1} & .913 & .896  & 228.43  & .903  & 208.81 & .883 & 214.66  & \textbf{.920} & 21.15 
        \\
        \textsf{skin-nonskin}  & .917 & .796  & 158.12  & .903  & 256.78& \textbf{.922} & 340.30 & .909 & 21.15 
        \\
        \textsf{SUSY}  & .774  & .734  & 3790.37  & .738  & 3829.23 & .747 & 7095.32 & \textbf{.760} & 178.92
        \\
        \noalign{\smallskip} \hline \hline
    \end{tabular}
    \caption{The test accuracy and time cost (in seconds) of different methods using linear kernel. The best accuracy on each data set is bolded.}
    \label{tab:3}
\end{table*}

\begin{figure*}[!h]
    \begin{center}
    \includegraphics[width=0.95\textwidth]{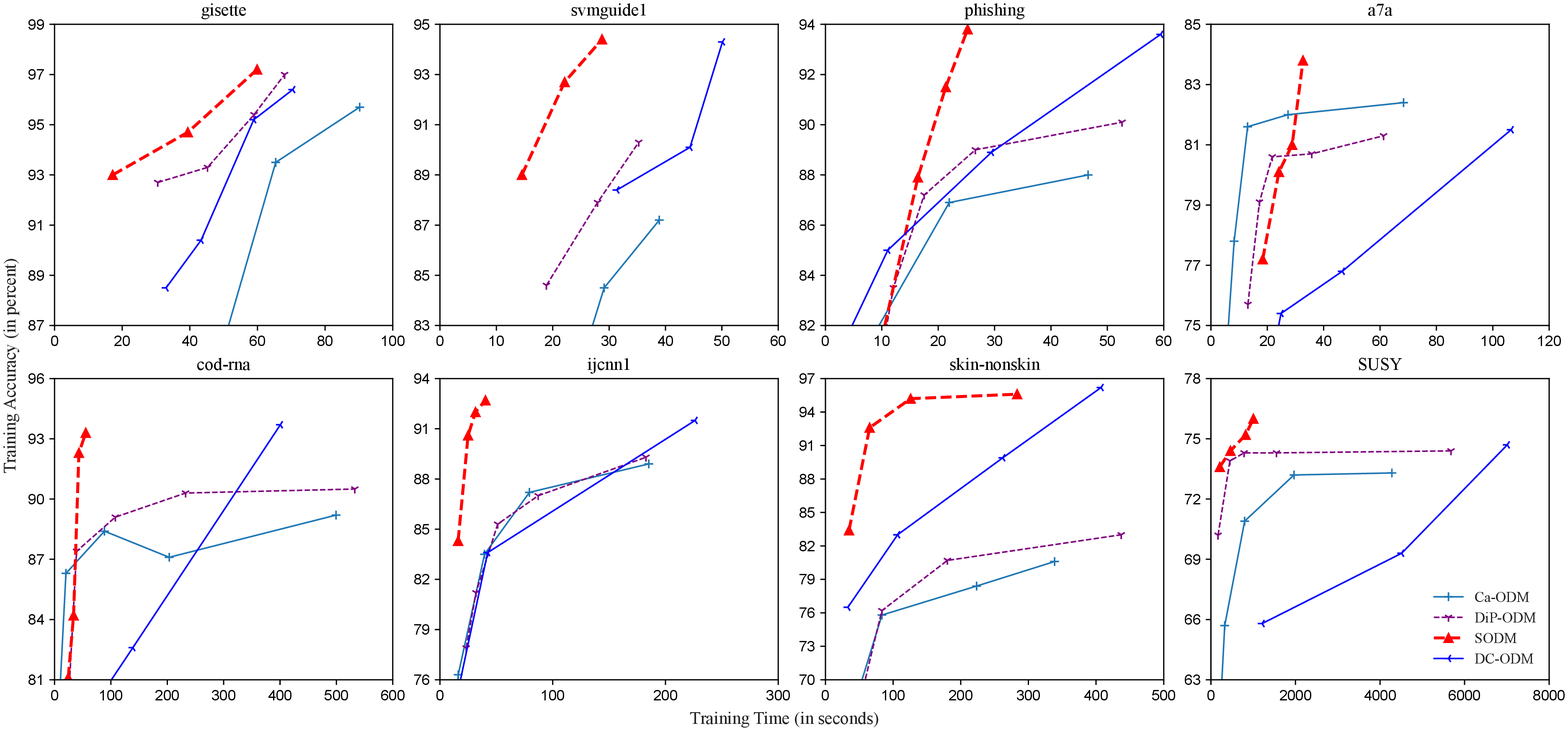}
    \caption{Comparisons of different methods using RBF kernel. Each point indicates the result when stop at different levels.}
    \label{fig:1}
    \end{center}
\end{figure*}

\begin{figure}[!h]
    \centering
    \includegraphics[width=0.95\linewidth]{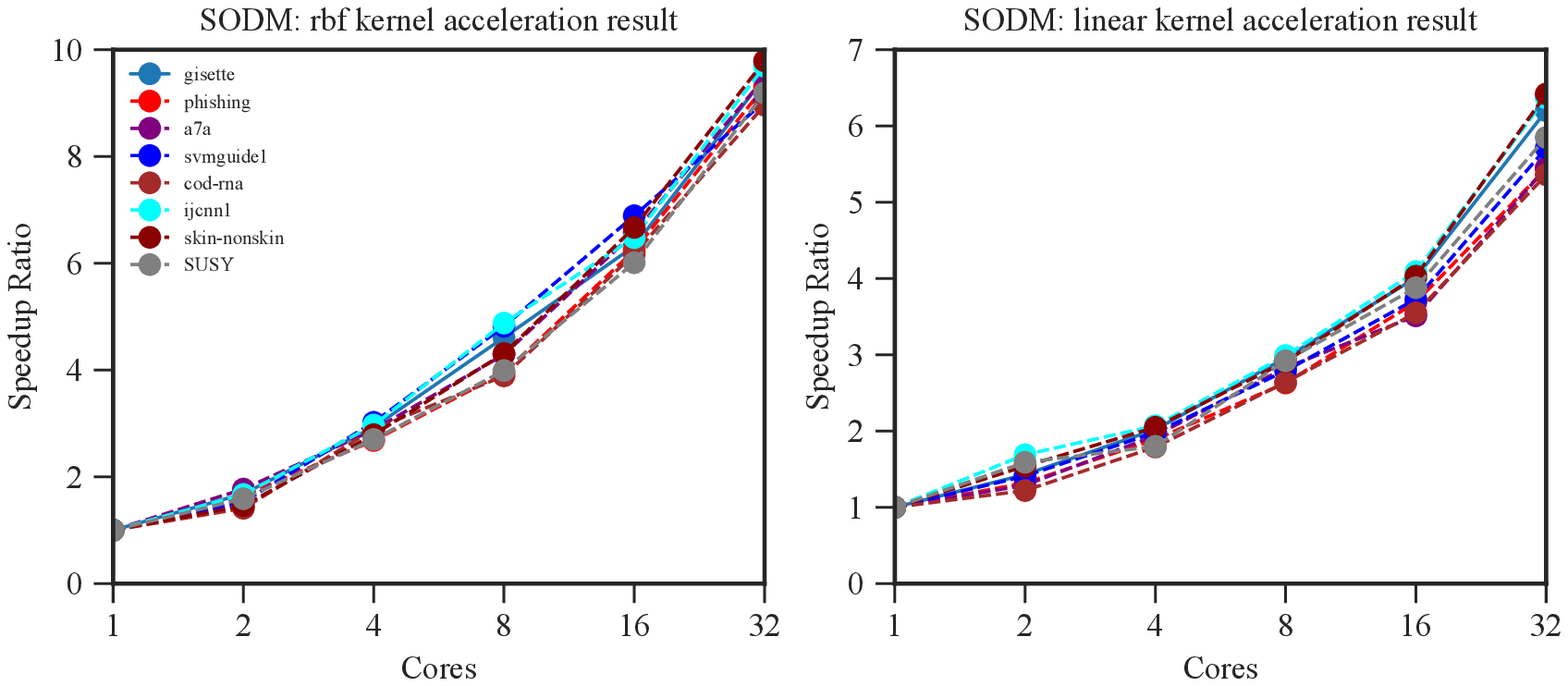}
    \caption{Training speedup ratio with cores increasing from 1 to 32 for SODM}
    \label{fig:3}
\end{figure}

\begin{figure*}[!h]
    \centering
    \includegraphics[width=0.95\textwidth]{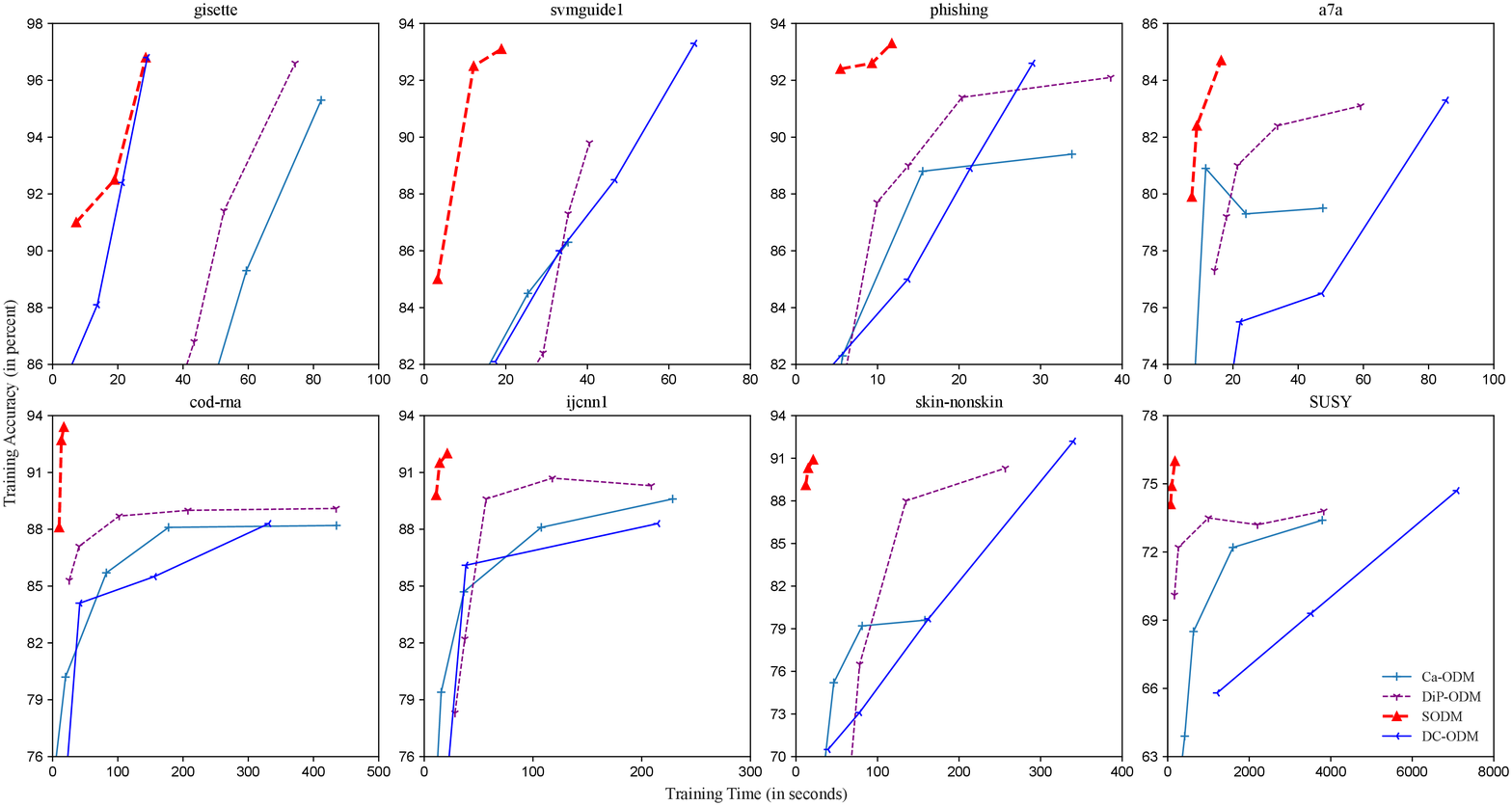}
    \caption{Comparisons of different methods using linear kernel. Each point of SODM indicates the result when every one third of epochs executed. Other points indicate the result stop at different levels.}
    \label{fig:2}
\end{figure*}

\begin{figure*}[!h]
    \centering
    \includegraphics[width=0.95\textwidth]{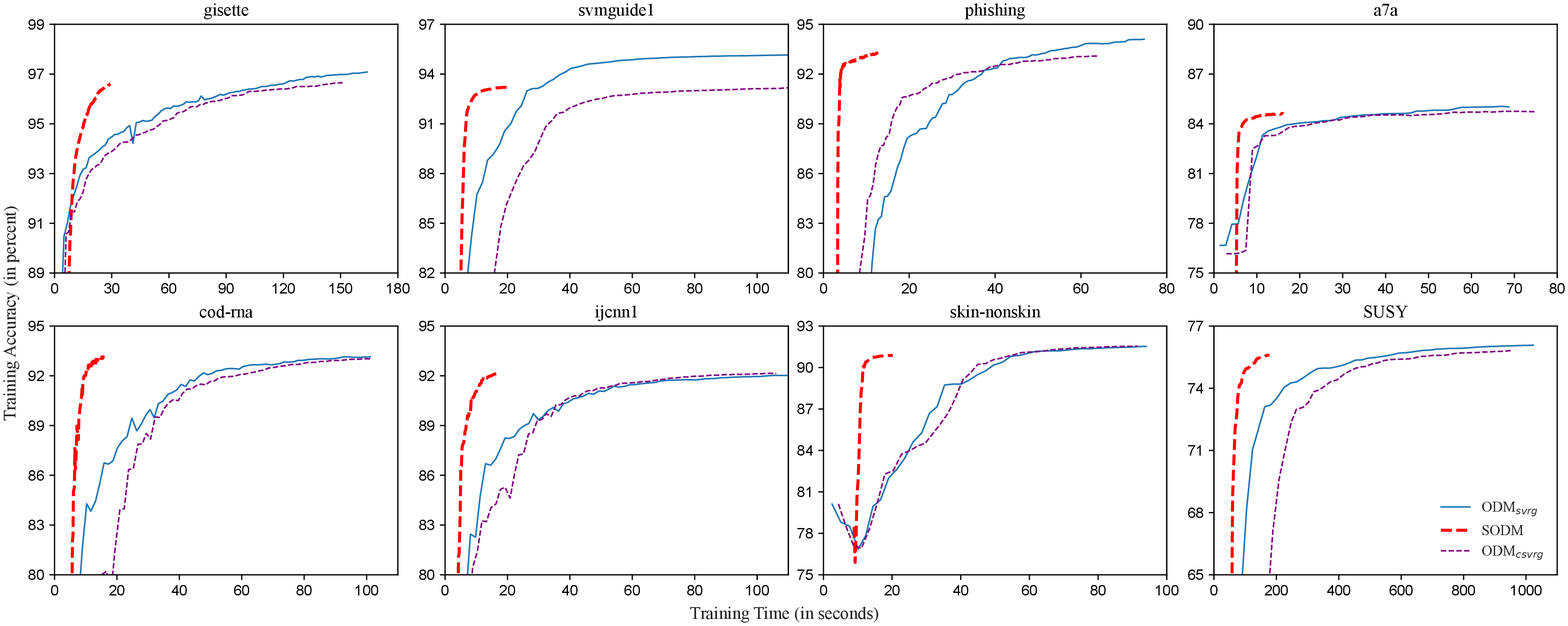}
    \caption{Comparisons of different gradient based methods}
    \label{fig:4}
\end{figure*}

\subsection{Setup}

All the experiments are performed on eight real-world data sets. The statistics of these data sets are summarized in Table~\ref{tab:1}. All features are normalized into the interval $[0,1]$. For each data set, eighty percent of instances are randomly selected as training data, while the rest are testing data. All the experiments are performed on a Spark~\cite{Zaharia2012Spark} cluster with one master and five workers. Each machine is equipped with 16 Intel Xeon E5-2670 CPU cores and 64GB RAM. Our implementation are available on Github ~\footnote{\url{https://github.com/CGCL-codes/SODM}}.

SODM is compared with three SOTA scalable QP solvers, i.e., Cascade approach~(Ca-ODM)~\cite{Graf2004cascade}, DiP approach~(DiP-ODM)~\cite{Singh2017Dip}, and DC approach~(DC-ODM)~\cite{Hsieh2014dc}. Besides, to evaluate the efficiency of the accelerated SODM for linear kernel, two SOTA gradient based methods are implementd, i.e., SVRG method~(ODM$_{svrg}$)~\cite{Johnson2013SVRG} and CSVRG method~(ODM$_{csvrg}$)~\cite{Tan2019Coreset}.

\subsection{Results with RBF Kernel}

Figure~\ref{fig:1} presents the test accuracy and time cost of different methods with RBF kernel. It can be seen that SODM performs significantly better than other methods. Specifically, SODM achieves the best test accuracy on 7 data sets and just slightly worse than DC-ODM on data set \textsf{skin-nonskin}. As for time cost, SODM achieves the fastest training speed on all data sets. The detailed test accuracy and time cost are presented in Table~\ref{tab:2}. The time cost and test accuracy with corresponding SVM can be found in \nameref{appendixexp}.

\subsection{Results with Linear Kernel}

Figure~\ref{fig:2} presents the test accuracy and time cost of different methods with linear kernel. It can be seen that SODM shows highly competitive performance compared with other methods. Specifically, SODM achieves the best test accuracy on 6 data sets and just slightly worse than DC-ODM on data set \textsf{svmguide1} and \textsf{skin-nonskin}. As for time cost, SODM achieves faster training speed on all data sets. The detailed test accuracy and time cost are presented in Table~\ref{tab:3}. In Figure~\ref{fig:3}, we show the training speedup ratio with cores increasing from 1 to 32 for linear kernel and RBF kernel, respectively. When 32 cores used, RBF kernel SODM achieves more than 9 times training speedup while linear kernel SODM achieves over 5 times training speedup.

\subsection{Comparison with Gradient Based Methods}

Figure~\ref{fig:4} compares the test accuracy and time cost between our acceleration method and other gradient based methods. We observe that our method can get competitive result. Meanwhile, our method achieves over 5 times faster speed than other methods. This indicates that our scalable acceleration method can achieve great training speed while hold the generalization performance.

\section{Conclusion}

Although lots of works have been proposed to solve QP problems, these off-the-shelf solvers usually ignore the intrinsic structure of the optimization problem, thus can hardly achieve the greatest efficiency when directly applied to ODM. We propose a scalable ODM with a novel partition strategy, which can retain the first- and second- order statistics in both the original instance space and the RKHS, leading to significant speedup of training. In addition, an accelerating method is implemented to further improve the training when linear kernel is used. As shown in the experiments, SODM has great superiority to other scalable QP solvers in terms of both generalization performance and time cost. In the future, we will consider the circumstance in which data is located on different devices and can not be gathered together due to the limited bandwidth or user privacy.

\section*{Acknowledgments}

This work was supported in part by the National Key R\&D Program of China under Grant 2020AAA0108501, the National Natural Science Foundation of China under Grant 62006088, and the Key R\&D Program of Hubei under Grant 2020BAA020.

%% The file named.bst is a bibliography style file for BibTeX 0.99c
\bibliographystyle{named}
\bibliography{ijcai23}

\newpage
\appendix
\onecolumn

\section{Theoretical Proof}
\label{appendixproof}
\setcounter{theorem}{0}
In this section, we first infer the formulation of SODM in details. Then we give the full proof of Theorem~1 and Theorem~2.
\subsection{Preliminaries}

Given a labeled data set $\{ (\xv_i, y_i) \}_{i \in [M]}$, the primal problem of ODM is
\begin{align*}
    \min_{\wv, \xi_i, \epsilon_i} ~ p(\wv) = \frac{1}{2} \|\wv\|^2 + \frac{\lambda}{2M} \sum_{i \in [M]} \frac{\xi_i^2 + \upsilon \epsilon_i^2}{(1 - \theta)^2}, \quad \st ~ 1 - \theta - \xi_i \le y_i \wv^\top \phi(\xv_i) \le 1 + \theta + \epsilon_i, ~ \forall i \in [M].
\end{align*}
Denote $\Xv = [\phi(\xv_1), \ldots, \phi(\xv_M)]$, $\Yv = \diag(y_1, \ldots, y_M)$, $\xiv = [\xi_1; \ldots; \xi_M]$, $\epsilonv = [\epsilon_1; \ldots; \epsilon_M]$, the above formulation can be rewritten as
\begin{align} \label{eq: odm}
    \min_{\wv, \xiv, \epsilonv} ~ p(\wv) = \frac{1}{2} \|\wv\|^2 + \frac{\lambda (\|\xiv\|^2 + \upsilon \|\epsilonv\|^2)}{2M (1 - \theta)^2}, \quad \st ~ (1 - \theta) \onev_M - \xiv \le \Yv \Xv^\top \wv \le (1 + \theta) \onev_M + \epsilonv,
\end{align}
where $\onev_M$ is the $M$-dimensional all one vector.

With Lagrange multipliers $\zetav, \betav \in \Rbb^M_+$ for the two constraints respectively, the Lagrangian of Eqn.~\eqref{eq: odm} leads to
\begin{align} \label{eq: Lagrangian}
    \begin{split}
        L = \frac{1}{2} \|\wv\|^2 + \frac{\lambda (\|\xiv\|^2 + \upsilon \|\epsilonv\|^2)}{2M (1 - \theta)^2} - \zetav^\top (\Yv \Xv^\top \wv - (1 - \theta) \onev_M + \xiv) + \betav^\top (\Yv \Xv^\top \wv - (1 + \theta) \onev_M - \epsilonv),
    \end{split}
\end{align}
and the KKT conditions are
\begin{align}
    \label{eq: kkt-1}
     & \wv = \Xv  \Yv  (\zetav - \betav), \quad \xiv = \frac{M (1 - \theta)^2}{\lambda} \zetav, \quad \epsilonv = \frac{M (1 - \theta)^2}{\lambda \upsilon} \betav,      \\
    \label{eq: kkt-2}
     & \zeta_i (y_i \wv^\top \phi(\xv_i) - (1 - \theta) + \xi_i) = 0, \quad \beta_i (y_i \wv^\top \phi(\xv_i) - (1 + \theta) - \epsilon_i) = 0, \quad \forall i \in [M].
\end{align}
Eqn.~\eqref{eq: kkt-1} is derived by setting the partial derivative of $L$ w.r.t. $\{\wv, \xiv, \epsilonv\}$ to zero. Eqn.~\eqref{eq: kkt-2} is the complementary slackness conditions. Observe that $y_i \wv^\top \phi(\xv_i) < 1 - \theta$ and $y_i \wv^\top \phi(\xv_i) > 1 + \theta$ cannot hold simultaneously, therefore at least one of the two slack variables $\xi_i$ and $\epsilon_i$ is zero. According to Eqn.~\eqref{eq: kkt-1}, we have $\zeta_i \beta_i = 0$ for any $i \in [M]$.

The following dual problem of ODM follows by substituting Eqn.~\eqref{eq: kkt-1} back into Eqn.~\eqref{eq: Lagrangian}:
\begin{align} \label{eq: dual-odm}
    \min_{\zetav, \betav \in \Rbb^M_+} & ~d(\zetav, \betav) = \frac{1}{2} (\zetav - \betav)^\top \Qv (\zetav - \betav) + \frac{M c}{2} (\upsilon \|\zetav\|^2 + \|\betav\|^2) + (\theta-1) \onev_M^\top \zetav + (\theta+1) \onev_M^\top \betav,
\end{align}
where $\Qv = \Yv \Xv^\top \Xv \Yv$ and $c = (1 - \theta)^2 / \lambda \upsilon$ is a constant. By denoting $\alphav = [\zetav; \betav]$, above problem can be rewritten as a standard convex quadratic programming:
\begin{align*}
    \min_{\alphav \in \Rbb^{2M}_+} ~ f(\alphav) = \frac{1}{2} \alphav^\top
    \begin{bmatrix}
        \Qv + M c \upsilon \Iv & -\Qv          \\
        - \Qv                  & \Qv + M c \Iv
    \end{bmatrix} \alphav +
    \begin{bmatrix}
        (\theta - 1) \onev_M \\
        (\theta + 1) \onev_M
    \end{bmatrix}^\top \alphav.
\end{align*}

Suppose $\{ (\xv_i^{(k)}, y_i^{(k)}) \}_{i \in [m]}$ are the instances in the $k$-th partition, then the dual problem of ODM on the $k$-th partition is
\begin{align*}
    \min_{\zetav_k, \betav_k \in \Rbb^m_+} d_k(\zetav_k, \betav_k) = \frac{1}{2} (\zetav_k - \betav_k)^\top \Qv^{(k)} (\zetav_k - \betav_k) + \frac{m c}{2} ( \upsilon \|\zetav_k\|^2 + \|\betav_k\|^2) + (\theta - 1) \onev_m^\top \zetav_k + (\theta + 1) \onev_m^\top \betav_k,
\end{align*}
where $\Qv_k = \Yv_k {\Xv_k}^\top \Xv_k \Yv_k$, $\Xv_k = [\phi(\xv_1^{(k)}), \ldots, \phi(\xv_m^{(k)})]$, and $\Yv_k = \diag(y_1^{(k)}, \ldots, y_m^{(k)})$. Notice that the optimization variables $\zetav_k$ and $\betav_k$ are decoupled on each partition, by merging all the $K$ problems together, we can get the formulation of SODM:
\begin{align*}
    \min_{\zetav, \betav \in \Rbb^M_+} & ~\dt (\zetav, \betav) = \frac{1}{2} (\zetav - \betav)^\top \Qvt (\zetav - \betav) + \frac{m c}{2} (\upsilon \|\zetav\|^2 + \|\betav\|^2) + (\theta-1) \onev_M^\top \zetav + (\theta+1) \onev_M^\top \betav,
\end{align*}
where $\Qvt = \diag(\Qv_1, \ldots, \Qv_k)$ is a block diagonal matrix, $\zetav = [\zetav_1; \ldots; \zetav_k]$, and $\betav = [\betav_1; \ldots; \betav_k]$.

\subsection{Proof of Theorem~1}

\begin{theorem} \label{thm-1}
    Suppose the optimal solutions of ODM and SODM are $\alphav^\star = [\zetav^\star; \betav^\star]$ and $\alphavt^\star = [\zetavt^\star; \betavt^\star]$, respectively. The gaps between the optimal objective values and solutions satisfy
    \begin{align}
        \label{eq: bound-1}
         & 0 \leq d(\zetavt^\star, \betavt^\star) - d(\zetav^\star, \betav^\star) \leq U^2 (Q + M (M - m) c), \\
        \label{eq: bound-2}
         & \| \alphavt^\star - \alphav^\star\|^2 \le \frac{U^2}{M c \upsilon} (Q + M (M - m) c),
    \end{align}
    where $U = \max ( \|\alphav^\star\|_\infty, \|\alphavt^\star\|_\infty)$ and $Q = \sum_{i,j:P(i) \neq P(j)} |[\Qv]_{ij}|$.
\end{theorem}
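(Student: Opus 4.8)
The plan is to prove the two bounds by exploiting the fact that the objectives $d$, $\dt$ and $f$ are all quadratics that differ only in controlled ways, and to chain together three estimates: (i) $d$ vs.\ $\dt$ on a fixed point, (ii) the sandwich inequality coming from optimality, and (iii) a strong-convexity lower bound on the Hessian of $f$. First I would record the left inequality in Eqn.~\eqref{eq: bound-1}: since $\alphav^\star$ minimizes $d$ over $\Rbb^{2M}_+$ and $\alphavt^\star$ is feasible, $d(\zetav^\star,\betav^\star)\le d(\zetavt^\star,\betavt^\star)$, giving nonnegativity.

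Next I would bound the difference $|d(\zetav,\betav)-\dt(\zetav,\betav)|$ for an arbitrary feasible $(\zetav,\betav)$ with $\|\,\cdot\,\|_\infty\le U$. Comparing Eqn.~\eqref{eq: dual-odm} with the SODM objective, the only changes are $\Qv\mapsto\Qvt$ in the quadratic form and $M\mapsto m$ in the coefficient $\tfrac{Mc}{2}\upsilon\|\zetav\|^2+\tfrac{Mc}{2}\|\betav\|^2$. The first change contributes $\tfrac12(\zetav-\betav)^\top(\Qv-\Qvt)(\zetav-\betav)$; since $\Qv-\Qvt$ has nonzero entries exactly at indices with $P(i)\neq P(j)$, this is bounded in absolute value by $\tfrac12\sum_{i,j:P(i)\neq P(j)}|[\Qv]_{ij}|\,|[\zetav-\betav]_i|\,|[\zetav-\betav]_j|\le \tfrac12 Q\,(2U)^2 = 2U^2 Q$ — I will tighten the constant so it matches $U^2 Q$, using $\zeta_i\beta_i=0$ so that $|[\zetav-\betav]_i|\le U$ rather than $2U$. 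The second change contributes $\tfrac{(M-m)c}{2}(\upsilon\|\zetav\|^2+\|\betav\|^2)$, and since each of $\|\zetav\|^2,\|\betav\|^2$ is at most $M U^2$ (and using $\upsilon$'s scaling), this is at most $\tfrac{M(M-m)c}{2}\cdot(\text{const})\,U^2$; again I would track constants to land on $M(M-m)c\,U^2$. Applying this at $(\zetav^\star,\betav^\star)$ and at $(\zetavt^\star,\betavt^\star)$, and inserting $\dt(\zetavt^\star,\betavt^\star)\le\dt(\zetav^\star,\betav^\star)$ between them, yields
\begin{align*}
d(\zetavt^\star,\betavt^\star)-d(\zetav^\star,\betav^\star)
&\le \big[\dt(\zetavt^\star,\betavt^\star)+U^2(Q+M(M-m)c)\big]\\
&\quad -\big[\dt(\zetav^\star,\betav^\star)-U^2(Q+M(M-m)c)\big]\le U^2(Q+M(M-m)c),
\end{align*}
after absorbing the factor; this is the right inequality of Eqn.~\eqref{eq: bound-1}. (I expect careful bookkeeping of the constant — whether it is $U^2(\cdots)$ or $2U^2(\cdots)$ — to be the one genuinely fiddly point, resolved by the $\zeta_i\beta_i=0$ observation from the KKT conditions.)

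Finally, for Eqn.~\eqref{eq: bound-2} I would switch to the variable $\alphav=[\zetav;\betav]$ and the quadratic $f$. Since $f$ is quadratic with gradient $\gv=\nabla f(\alphav^\star)$ and Hessian $\Hv$, the Taylor expansion is exact:
\begin{align*}
f(\alphavt^\star)=f(\alphav^\star)+\gv^\top(\alphavt^\star-\alphav^\star)+\tfrac12(\alphavt^\star-\alphav^\star)^\top\Hv(\alphavt^\star-\alphav^\star).
\end{align*}
By the first-order optimality condition for the box-constrained minimizer $\alphav^\star$, $\gv^\top(\alphavt^\star-\alphav^\star)\ge 0$. Using the stated lower bound
\begin{align*}
\Hv\succeq\begin{bmatrix}\Qv&-\Qv\\-\Qv&\Qv\end{bmatrix}+Mc\upsilon\,\Iv_{2M},
\end{align*}
where the first matrix is $[\,\Iv;-\Iv\,][\,\Iv;-\Iv\,]^\top$ after a block reshuffle, hence positive semidefinite (note $Mc\ge Mc\upsilon$ is not needed — the clean bound uses $\upsilon\le 1$ or simply keeps the $\upsilon$; I will use whichever the paper's convention gives, landing on the factor $Mc\upsilon$), we get $(\alphavt^\star-\alphav^\star)^\top\Hv(\alphavt^\star-\alphav^\star)\ge Mc\upsilon\|\alphavt^\star-\alphav^\star\|^2$. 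Combining, $\tfrac{Mc\upsilon}{2}\|\alphavt^\star-\alphav^\star\|^2\le f(\alphavt^\star)-f(\alphav^\star)=d(\zetavt^\star,\betavt^\star)-d(\zetav^\star,\betav^\star)$, and substituting the right inequality of Eqn.~\eqref{eq: bound-1} gives
\begin{align*}
\|\alphavt^\star-\alphav^\star\|^2\le\frac{2}{Mc\upsilon}\big(d(\zetavt^\star,\betavt^\star)-d(\zetav^\star,\betav^\star)\big)\le\frac{U^2}{Mc\upsilon}\big(Q+M(M-m)c\big),
\end{align*}
again after reconciling the leading constant with the expansion-versus-strong-convexity factor of $2$. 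The main obstacle throughout is not conceptual but the precise tracking of the numerical constants in (ii) — making the $2U^2$ estimates collapse to $U^2$ via the complementary-slackness identity $\zeta_i^\star\beta_i^\star=0$, and likewise for the SODM solution, so that the final coefficient is exactly as claimed.
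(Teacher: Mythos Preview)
Your proposal is correct and follows essentially the same route as the paper: compare $d$ and $\dt$ term by term using $\Qv-\Qvt$ and the $M\to m$ change, sandwich via the optimality of $\alphavt^\star$ for $\dt$, invoke the complementary-slackness identity $\zeta_i\beta_i=0$ to get $|[\zetav-\betav]_i|\le U$, and then use the exact quadratic expansion of $f$ together with $\Hv\succeq Mc\upsilon\,\Iv$ for the solution bound. Your caution about the constants is well placed---the clean $U^2Q$ arises because the $(M-m)c$ term is one-signed so only the $\Qv-\Qvt$ contribution appears on both sides of the sandwich, and the paper's own handling of the final factor of $2$ is similarly brisk.
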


\begin{proof}[Proof]
    The left-hand side of Eqn.~\eqref{eq: bound-1} is due to the optimality of $\zetav^\star$ and $\betav^\star$.

    Without loss of generality, suppose $\{ (\xv_i, y_i) \}_{i \in [M]}$ are ordered by partition index, i.e., the first $m$ instances are on the first partition, and the second $m$ instances are on the second partition, etc. According to the definition of $d(\zetav, \betav)$ and $\dt (\zetav, \betav)$, and by denoting $\gammav = \zetav - \betav$, we have
    \begin{align*}
        d(\zetav, \betav) & = \dt (\zetav, \betav) + \frac{1}{2} (\zetav - \betav)^\top (\Qv - \Qvt) (\zetav - \betav) + \frac{(M - m) c}{2} (\upsilon \|\zetav\|^2 + \|\betav\|^2)         \\
                          & = \dt (\zetav, \betav) + \frac{1}{2} \sum_{i,j: P(i) \ne P(j)} [\gammav]_i [\gammav]_j [\Qv]_{ij} + \frac{(M - m) c}{2} (\upsilon \|\zetav\|^2 + \|\betav\|^2).
    \end{align*}
    In particular, the following holds:
    \begin{align}
        \label{eq: eqn-1}
        d(\zetav^\star, \betav^\star)   & = \dt (\zetav^\star, \betav^\star) + \frac{1}{2} \sum_{i,j: P(i) \ne P(j)} [\gammav^\star]_i [\gammav^\star]_j [\Qv]_{ij} + \frac{(M - m) c}{2} (\upsilon \|\zetav^\star\|^2 + \|\betav^\star\|^2),       \\
        \label{eq: eqn-2}
        d(\zetavt^\star, \betavt^\star) & = \dt (\zetavt^\star, \betavt^\star) + \frac{1}{2} \sum_{i,j: P(i) \ne P(j)} [\gammavt^\star]_i [\gammavt^\star]_j [\Qv]_{ij} + \frac{(M - m) c}{2} (\upsilon \|\zetavt^\star\|^2 + \|\betavt^\star\|^2).
    \end{align}
    Notice that at least one of $\zeta_i$ and $\beta_i$ is zero, thus $|\gamma_i| \le |\zeta_i| + |\beta_i| = \max (|\zeta_i|, |\beta_i|) \le U$. Subtracting Eqn.~\eqref{eq: eqn-1} from Eqn.~\eqref{eq: eqn-2} yields the right-hand side of Eqn.~\eqref{eq: bound-1}:
    \begin{align*}
        d(\zetavt^\star, \betavt^\star) - d(\zetav^\star, \betav^\star)
         & \le \frac{1}{2} \sum_{i,j: P(i) \ne P(j)} |[\gammavt^\star]_i [\gammavt^\star]_j - [\gammav^\star]_i [\gammav^\star]_j| \cdot |[\Qv]_{ij}| + \frac{(M - m) c}{2} (\upsilon \|\zetav^\star\|^2 + \|\betav^\star\|^2) \\
         & \le U^2 Q + U^2 M (M - m) c,
    \end{align*}
    where the first inequality follows from the optimality of $\zetavt^\star$ and $\betavt^\star$, and the third inequality is derived by the boundness of $\zetav, \betav, \gammav$ and $\upsilon \le 1$.

    Since $f(\alphav)$ is a quadratic function, it can be expanded at $\alphav^\star$ as
    \begin{align*}
        f(\alphavt^\star)
         & = f(\alphav^\star) + \nabla f(\alphav^\star)^\top (\alphavt^\star - \alphav^\star) + \frac{1}{2}(\alphavt^\star - \alphav^\star)^\top \nabla^2 f(\alphav^\star) (\alphavt^\star - \alphav^\star) \\
         & \ge f(\alphav^\star) + \frac{1}{2}(\alphavt^\star - \alphav^\star)^\top \nabla^2 f(\alphav^\star) (\alphavt^\star - \alphav^\star)                                                               \\
         & = f(\alphav^\star) + \frac{1}{2}(\alphavt^\star - \alphav^\star)^\top
        \begin{bmatrix}
            \Qv + M c \upsilon \Iv & -\Qv          \\
            - \Qv                  & \Qv + M c \Iv
        \end{bmatrix} (\alphavt^\star - \alphav^\star)                                                                                                                                           \\
         & \ge  f(\alphav^\star) + \frac{1}{2}(\alphavt^\star - \alphav^\star)^\top
        \begin{bmatrix}
            M c \upsilon \Iv \\
             & M c \Iv
        \end{bmatrix} (\alphavt^\star - \alphav^\star)                                                                                                                                           \\
         & \ge  f(\alphav^\star) + \frac{1}{2} M c \upsilon \| \alphavt^\star - \alphav^\star \|^2,
    \end{align*}
    where the first inequality follows from the first order optimal condition, and the third inequality uses the fact $\upsilon \le 1$. Thus $\| \alphavt^\star - \alphav^\star \|^2$ can be upper bounded by
    \begin{align*}
        \| \alphavt^\star - \alphav^\star \|^2 \le \frac{1}{M c \upsilon} (d(\zetavt^\star, \betavt^\star) - d(\zetav^\star, \betav^\star)) \le \frac{U^2}{M c \upsilon} (Q + M (M - m) c),
    \end{align*}
    which shows that Eqn.~\eqref{eq: bound-2} holds and concludes the proof.
\end{proof}

\subsection{Proof of Theorem~2}

\begin{theorem}
    For shift-invariant kernel $\kappa$ with $\kappa(0) = r^2$, that is $\|\phi(\xv)\| = r$ for any $\xv$. With the partition strategy described above, for any $k \in [K]$, we have
    \begin{align} \label{eq: bound-3}
        d_k(\zetav_k, \betav_k) - d(\zetav^\star, \betav^\star) \leq \frac{U^2}{2} ( M^2 r^2 + r^2 \cos \tau (2C - M^2) ) + U^2 M^2 c +2 U M,
    \end{align}
    where $C = \sum_{i,j \in [M]} \Ibb (\varphi(\xv_i) \neq \varphi(\xv_j))$, and $U$ is defined in theorem~\ref{thm-1}.
\end{theorem}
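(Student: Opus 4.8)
The plan is to reproduce the two-stage argument. First, rewrite the optimal local dual value $d_k(\zetav_k,\betav_k)$ as the optimal dual value of an ODM trained on an \emph{auxiliary} data set $\widetilde{\Dcal}_k$ of exactly $M$ points, so it can be compared directly with $d(\zetav^\star,\betav^\star)$; then expand the resulting difference into a quadratic piece plus a regularizer piece plus a linear piece, and bound each — the quadratic one via the minimal principal angle $\tau$, the others via boundedness of the dual variables.

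\emph{Auxiliary data set.} The partition $\Dcal_k$ has $m=M/K$ points and, by construction, $|\Dcal_k\cap\Ccal_s|=|\Ccal_s|/K$ for every stratum $s$. Let $\widetilde{\Dcal}_k$ be obtained by replicating each point of $\Dcal_k$ exactly $K$ times; then $|\widetilde{\Dcal}_k|=M$ and $|\widetilde{\Dcal}_k\cap\Ccal_s|=|\Ccal_s|$, so the stratum sizes of $\widetilde{\Dcal}_k$ agree with those of $\Dcal$. In the primal ODM on $\widetilde{\Dcal}_k$ the $K$ replicas of a point carry identical constraints, and since the objective is strictly convex in $(\wv,\xiv,\epsilonv)$ its minimizer is unique, hence invariant under permuting replicas, hence assigns equal slacks to replicas. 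Collapsing the replicated sum turns the empirical-loss term $\frac{\lambda}{2M}\sum_{i\in[M]}\frac{\xi_i^2+\upsilon\epsilon_i^2}{(1-\theta)^2}$ into $\frac{\lambda}{2m}\sum_{i\in[m]}\frac{\xi_i^2+\upsilon\epsilon_i^2}{(1-\theta)^2}$, which together with the (now redundant) constraints is exactly the primal ODM on $\Dcal_k$. So the two primal problems share their optimal value, and strong duality for ODM gives $d_k(\zetav_k,\betav_k)=p_k(\wv_k)=\pt_k(\wv_k)=\dt_k(\zetavt_k,\betavt_k)$, where $\dt_k$ is the ODM dual on $\widetilde{\Dcal}_k$. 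Crucially $\dt_k$ has the same structure as $d$ in Eqn.~\eqref{eq: dual-odm}: it acts on $M$ variables with regularizer coefficient $Mc/2$ and linear vector $[(\theta-1)\onev_M;(\theta+1)\onev_M]$; only its Gram matrix $\Qvt_k$ — the kernel matrix of the replicated points — differs from $\Qv$. Finally, since the block-diagonal problem of Eqn.~\eqref{eq: approximation-problem} decouples into the $K$ local ODMs, its solution restricted to block $k$ is $\alphav_k$, so $\|\alphav_k\|_\infty\le\|\alphavt^\star\|_\infty\le U$; the auxiliary dual is $\tfrac1K$ times these components, and complementarity ($\zeta_i\beta_i=0$) gives $|[\zetavt_k-\betavt_k]_i|\le U$.

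\emph{Termwise bound.} Write $\gammavt_k=\zetavt_k-\betavt_k$ and $\gammav^\star=\zetav^\star-\betav^\star$, and decompose $\dt_k(\zetavt_k,\betavt_k)-d(\zetav^\star,\betav^\star)$ as the quadratic difference $\tfrac12\gammavt_k^\top\Qvt_k\gammavt_k-\tfrac12(\gammav^\star)^\top\Qv\gammav^\star$, plus the regularizer difference $\tfrac{Mc\upsilon}{2}(\|\zetavt_k\|^2-\|\zetav^\star\|^2)+\tfrac{Mc}{2}(\|\betavt_k\|^2-\|\betav^\star\|^2)$, plus the linear difference $(\theta-1)\onev_M^\top(\zetavt_k-\zetav^\star)+(\theta+1)\onev_M^\top(\betavt_k-\betav^\star)$. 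I would drop the nonnegative subtracted norms and use $\|\cdot\|_\infty\le U$ with $\upsilon\le1$ in the regularizer difference to get the $U^2M^2c$ term, and use $\zeta_i+\beta_i\le U$ entrywise with $0<\theta<1$ in the linear difference to get the $2UM$ term. For the quadratic difference I would drop $-\tfrac12(\gammav^\star)^\top\Qv\gammav^\star\le0$ and bound $\tfrac12\gammavt_k^\top\Qvt_k\gammavt_k=\tfrac12\|\sum_iu_i\phi(\xv_i)\|^2=\tfrac12\sum_{i,j}u_iu_j\kappa(\xv_i,\xv_j)$, with $u_i=[\gammavt_k]_iy_i$ and $|u_i|\le U$, by splitting the sum over pairs in a common stratum versus pairs in distinct stratums. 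By the replication there are $M^2-C$ same-stratum and $C$ cross-stratum ordered pairs in $\widetilde{\Dcal}_k$; bounding $\kappa$ by $r^2$ on the former and by $r^2\cos\tau$ on the latter (using $\|\phi\|=r$ and the definition of $\tau$), together with $|u_iu_j|\le U^2$, produces a bound equal to $\tfrac{U^2r^2}{2}$ times a combination of $M^2$, $C$ and $C\cos\tau$, which under the mild assumption $2C>M^2$ is at most $\tfrac{U^2}{2}(M^2r^2+r^2\cos\tau(2C-M^2))$. Adding the three bounds gives Eqn.~\eqref{eq: bound-3}.

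\emph{Main obstacle.} The hard part will be the quadratic term. One must (a) arrange the replication so that the comparison with $d(\zetav^\star,\betav^\star)$ is genuinely an equality — this forces precisely the $K$-fold replication, so that stratum sizes and hence $C$ transfer to $\widetilde{\Dcal}_k$ — and (b) upper-bound $\sum_{i,j}u_iu_j\kappa(\xv_i,\xv_j)$ when $u_iu_j$ may be negative; there the sign and geometry forced by shift-invariance and by $\tau$ are needed to keep the $\cos\tau$ factor on the cross-stratum part. The regularizer and linear estimates are routine boundedness bookkeeping.
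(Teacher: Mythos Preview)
Your auxiliary-data-set construction, the strong-duality identification $d_k(\zetav_k,\betav_k)=\dt_k(\zetavt_k,\betavt_k)$, the termwise decomposition, and the bounds on the regularizer and linear pieces all match the paper's argument essentially verbatim.

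Where you diverge is in the quadratic piece
\[
A=\tfrac12\gammavt_k^\top\Qvt_k\gammavt_k-\tfrac12(\gammav^\star)^\top\Qv\gammav^\star .
\]
You drop the second (PSD) form and try to bound $\tfrac12\gammavt_k^\top\Qvt_k\gammavt_k=\tfrac12\sum_{i,j}u_iu_j\kappa(\xv_i,\xv_j)$ alone, controlling cross-stratum summands by $U^2r^2\cos\tau$. The paper instead keeps both forms and bounds
\[
A\le \tfrac{U^2}{2}\sum_{i,j}\bigl|[\Qvt_k]_{ij}-[\Qv]_{ij}\bigr|,
\]
then argues that, after aligning the indexing so that the $i$-th point of $\widetilde{\Dcal}_k$ and of $\Dcal$ lie in the same stratum, both $[\Qvt_k]_{ij}$ and $[\Qv]_{ij}$ fall in a common interval: $[r^2\cos\tau,r^2]$ for same-stratum pairs and $[-r^2,r^2\cos\tau]$ for cross-stratum pairs. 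The entrywise difference is then at most the interval length, $r^2(1-\cos\tau)$ and $r^2(1+\cos\tau)$ respectively, which summed over the $M^2-C$ and $C$ pairs gives exactly $\tfrac{U^2}{2}\bigl(M^2r^2+r^2\cos\tau(2C-M^2)\bigr)$.

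Your route has the gap you yourself flag as the ``main obstacle'' and do not close: the definition of $\tau$ only gives $\kappa(\xv_i,\xv_j)\le r^2\cos\tau$ for cross-stratum pairs, not $|\kappa(\xv_i,\xv_j)|\le r^2\cos\tau$. When $u_iu_j<0$ and $\kappa$ is near $-r^2$, the product $u_iu_j\kappa$ can be as large as $U^2r^2$, so the termwise estimate $u_iu_j\kappa\le U^2r^2\cos\tau$ fails, and your intermediate bound $(M^2-C)r^2+Cr^2\cos\tau$ is not justified. The paper's device of comparing the two Gram matrices entrywise sidesteps this sign problem entirely, because $|[\Qvt_k]_{ij}-[\Qv]_{ij}|$ is already nonnegative and the $\cos\tau$ appears through the \emph{width} of the interval rather than as a one-sided bound on $\kappa$. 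To align with the paper you should retain both quadratic forms and bound their difference via $\sum_{i,j}|[\Qvt_k]_{ij}-[\Qv]_{ij}|$, using the stratum-preserving reindexing of $\widetilde{\Dcal}_k$ that you already set up.
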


\begin{proof}[Proof]
    Construct the auxiliary data set $\Dcalt_k$ by repeating each instance in $\Dcal_k$ for $K$ times, i.e.,
    \begin{align*}
        \Dcalt_k = \{ \underbrace{(\xv_1^{(k)}, y_1^{(k)}), \ldots ,(\xv_m^{(k)}, y_m^{(k)})}_{1}, \underbrace{(\xv_1^{(k)}, y_1^{(k)}), \ldots ,(\xv_m^{(k)}, y_m^{(k)})}_{2}, \ldots, \underbrace{(\xv_1^{(k)}, y_1^{(k)}), \ldots ,(\xv_m^{(k)}, y_m^{(k)})}_{K} \}.
    \end{align*}
    It can be seen that primal ODM on $\Dcalt_k$ and $\Dcal_k$ have the same constraints (by removing repetitions), thus for any $\wv$, it is feasible on $\Dcalt_k$ iff it is feasible on $\Dcal_k$. In addition, we have
    \begin{align*}
        \pt_k(\wv) = \frac{1}{2} \|\wv\|^2 + \frac{\lambda}{2M} \sum_{i \in [M]} \frac{K (\xi_i^2 + \upsilon \epsilon_i^2)}{(1 - \theta)^2} = \frac{1}{2} \|\wv\|^2 + \frac{\lambda}{2m} \sum_{i \in [m]} \frac{\xi_i^2 + \upsilon \epsilon_i^2}{(1 - \theta)^2} = p_k(\wv).
    \end{align*}
    Therefore, primal ODM on $\Dcalt_k$ and $\Dcal_k$ have the same optimal objective. Since strong dual theorem holds for ODM, we have $d_k(\zetav_k, \betav_k) = p_k(\wv_k) = \pt_k(\wv) = \dt_k (\zetavt_k, \betavt_k)$. Notice that [cf. \eqref{eq: dual-odm}]
    \begin{align*}
        \dt_k (\zetavt_k, \betavt_k) = \frac{1}{2} (\zetavt_k - \betavt_k)^\top \Qvt_k (\zetavt_k - \betavt_k) + \frac{M c}{2} (\upsilon \|\zetavt_k\|^2 + \|\betavt_k\|^2) + (\theta - 1) \onev_M^\top \zetavt_k + (\theta + 1) \onev_M^\top \betavt_k,
    \end{align*}
    where $\Qvt_k = \onev_{K \times K} \otimes \Qv_k$. Then the left-hand side of Eqn.~\eqref{eq: bound-3} is
    \begin{align*}
        d_k(\zetav_k , \betav_k) - d(\zetav^\star, \betav^\star) = \dt_k(\zetavt_k, \betavt_k) - d(\zetav^\star, \betav^\star) = A + B,
    \end{align*}
    where
    \begin{align*}
        A & = \frac{1}{2} (\zetavt_k - \betavt_k)^\top \Qvt_k (\zetavt_k - \betavt_k) - \frac{1}{2} (\zetav^\star - \betav^\star)^\top \Qv (\zetav^\star - \betav^\star),                                                                         \\
        B & = \frac{M c \upsilon}{2} (\|\zetavt_k\|^2 - \|\zetav^\star\|^2) + \frac{M c}{2} (\|\betavt_k\|^2 - \|\betav^\star\|^2) + (\theta - 1) \onev_M^\top (\zetavt_k - \zetav^\star) + (\theta + 1) \onev_M^\top (\betavt_k - \betav^\star).
    \end{align*}
    Notice that $0 \leq \upsilon \leq 1$, $0 \leq \theta \leq 1$ and $\zetav, \betav$ is upper bounded by $U$, we have
    \begin{align*}
        B \le \frac{M c \upsilon}{2} \|\zetavt_k\|^2 + \frac{M c}{2} \|\betavt_k\|^2 + (\theta - 1) \onev_M^\top (\zetavt_k - \zetav^\star) + (\theta + 1) \onev_M^\top (\betavt_k - \betav^\star) \le U^2 M^2 c + 2 U M.
    \end{align*}
    As for the first term, by denoting $\gammav = \zetav - \betav$, it can be seen that
    \begin{align*}
        A = \frac{1}{2} \gammavt_k^\top \Qvt_k \gammavt_k - \frac{1}{2} \gammavt^\star {}^\top \Qv \gammavt^\star \le \frac{U^2}{2} \sum_{i,j \in [M]} | [\Qvt_k]_{ij} - [\Qv]_{ij} |.
    \end{align*}
    Suppose the angle between $\phi(\xv_i)$ and $\phi(\xv_j)$ is $\vartheta$, then
    \begin{align*}
        [\Qv]_{ij} = \phi(\xv_i)^\top \phi(\xv_j) = \| \phi(\xv_i) \| \| \phi(\xv_j) \| \cos \vartheta = r^2 \cos \vartheta \in \begin{cases}
                                                                                                                                    [-r^2, r^2 \cos \tau], & \varphi(\xv_i) \ne \varphi(\xv_j), \\
                                                                                                                                    [r^2 \cos \tau, r^2],  & \varphi(\xv_i) = \varphi(\xv_j).
                                                                                                                                \end{cases}
    \end{align*}
    The arguments for $[\Qvt_k]_{ij}$ is similar and we have
    \begin{align*}
        A & \le \frac{U^2}{2} \left( \sum_{\varphi(\xv_i) \ne \varphi(\xv_j)} | [\Qvt_k]_{ij} - [\Qv]_{ij} | + \sum_{\varphi(\xv_i) = \varphi(\xv_j)} | [\Qvt_k]_{ij} - [\Qv]_{ij} | \right) \\
          & \le \frac{U^2}{2} \left( \sum_{\varphi(\xv_i) \ne \varphi(\xv_j)} r^2 (1 + \cos \tau) + \sum_{\varphi(\xv_i) = \varphi(\xv_j)} r^2 (1 - \cos \tau) \right)                       \\
          & = \frac{U^2}{2} ( C r^2 (1 + \cos \tau) + (M^2 - C) r^2 (1 - \cos \tau) )                                                                                                        \\
          & = \frac{U^2}{2} ( M^2 r^2 + r^2 \cos \tau (2C - M^2) ).
    \end{align*}
    By putting the upper bound of $A$ and $B$ together concludes the proof.
\end{proof}
\newpage
\section{Supplementary Experiments}
\label{appendixexp}
In this section, We supplement the experiments of Scalable SVM. Here we compared the results of Ca-ODM, DiP-ODM, DC-ODM and SODM with corresponding SVM methods on all datasets using rbf kernel as supplementary.

\begin{table*}[!h]
    \centering
    \begin{tabular}{ l | c | c | c | c | c | c | c | c }
        \hline \hline \noalign{\smallskip}
        \multirow{2}*{Data sets} & \multicolumn{2}{c|}{Ca-SVM} & \multicolumn{2}{c|}{Ca-ODM}&  \multicolumn{2}{c|}{Dip-SVM} & \multicolumn{2}{c}{Dip-ODM}  \\
        \noalign{\smallskip} \cline{2-9} \noalign{\smallskip}
        ~           & Acc.  & Time(s) & Acc. & Time(s) & Acc. & Time(s) & Acc. & Time(s)\\
        \noalign{\smallskip} \hline \noalign{\smallskip}
        gisette  & .932 & 104.67  & .957 & 90.22   & .925 & 67.98 & .970 & 68.02 
        \\
        svmguide1  & .904 & 49.20  & .872 & 38.90  & .895  & 33.20 & .903  & 35.25 
        \\
        phishing & .910  & 43.85 & .880  & 49.60  & .902  & 55.02  & .901  & 52.61 
        \\
        a7a & .817  & 59.40  & .824  & 68.36  & .815  & 58.92 & .813  & 61.24 
        \\
        cod-rna  & .880  & 458.43  & .892  & 499.38  & .873  & 508.33  & .905  & 532.68 
        \\
        ijcnn1  & .803  & 150.11  & .889  & 185.20  & .824  & 156.27  & .893  & 182.71 
        \\
        skin-nonskin  & .811  & 299.96 & .806  & 338.73 & .855  & 343.82 & .830  & 437.20
        \\
        SUSY  & .720  & 4153.10  & .733  & 4280.23  & .752  & 5377.99  & .744  & 5678.66
        \\
        \noalign{\smallskip} \hline \hline \noalign{\smallskip}
    \multirow{2}*{Data sets} & \multicolumn{2}{c|}{DC-SVM} & \multicolumn{2}{c|}{DC-ODM}&  \multicolumn{2}{c|}{SSVM} & \multicolumn{2}{c}{SODM}  \\
        \noalign{\smallskip} \cline{2-9} \noalign{\smallskip}
        ~           & Acc.  & Time(s) & Acc. & Time(s) & Acc. & Time(s) & Acc. & Time(s)\\
        \noalign{\smallskip} \hline \noalign{\smallskip}
        gisette & .966 & 72.50  & .964 & 70.44  & .948 & 53.32 & .972 & 59.89 
        \\
        svmguide1  & .952 & 37.63 & .943 & 50.11  & .902 & 20.33 & .944 & 28.74
        \\
        phishing & .928 & 42.53 & .936 & 59.47 & .929  & 30.70  & .938  & 25.22 
        \\
        a7a & .818 & 97.99 & .815 & 106.51 & .810  & 40.54 & .838  & 32.67 
        \\
        cod-rna  & .915 & 430.26 & .931 & 400.61 & .889  & 64.91 & .933  & 55.41 
        \\
        ijcnn1   & .920 & 266.95 & .915 & 226.26 & .803 & 105.11 & .927  & 40.32 
        \\
        skin-nonskin   & .959 & 420.51 & .962 & 407.46 & .848  & 320.05 & .956  & 283.36  
        \\
        SUSY   & .758 & 7520.00 & .747 & 7009.36 & .720  & 3920.28 & .760  & 1004.33 
        \\
        \noalign{\smallskip} \hline \hline
    \end{tabular}
    \caption{The test accuracy and time cost of different methods using RBF kernel.}
    \label{tab:1}
\end{table*}

We conclude the detailed test accuracy and time cost in Table~\ref{tab:1} and get the following observation. 
\begin{itemize}
\item It can be seen that DC-SVM performs significantly better generality than other SVM methods. Specifically, DC-SVM achieves the best test accuracy on 6 data sets among SVM methods, just slightly worse than DC-SVM on phishing dataset and worse than Ca-SVM on a7a dataset. On time cost, SSVM achieves the fastest training speed on all 7 data sets and worse than Ca-SVM on skin-nonskin dataset.

\item Compared with SODM, SSVM achieves worse test accuracy on all 8 datasets and lower training speed on 7 datasets. Since SODM considers margin distribution by partitioning data and making local data distribution similar with the global one, it is more suitable for this task.

\item Compared with DC-ODM, DC-SVM performs better test accuracy on 5 datasets. Besides, the training time of these two methods are closed since they have the same parallel mechanism.  

\item Compared with Ca-ODM and Dip-ODM, the corresponding SVM methods achieves better time efficiency. Ca-SVM achieves better time efficiency on 6 datasets, while Dip-SVM achieves better time efficiency on 7 datasets. Since these two methods greedily discard data during optimization. On generality, Ca-SVM outperforms Ca-ODM on 3 datasets, while Dip-SVM outperforms Dip-ODM on 4 datasets.

\end{itemize}

\end{document}